\documentclass{article}

\usepackage{microtype}
\usepackage{graphicx}
\usepackage{subcaption}
\usepackage{booktabs} % for professional tables
\usepackage{multirow}   % 多行合并
\usepackage{fontawesome5}
\usepackage{xcolor} % 为了给图标上色
\usepackage{hyperref}

\usepackage[accepted]{icml2026}

\usepackage{amsmath}
\usepackage{amssymb}
\usepackage{mathtools}
\usepackage{amsthm}
\usepackage{url}

\usepackage[capitalize,noabbrev]{cleveref}

\theoremstyle{plain}
\newtheorem{theorem}{Theorem}[section]
\newtheorem{proposition}[theorem]{Proposition}

\newtheorem{corollary}[theorem]{Corollary}
\theoremstyle{definition}

\theoremstyle{remark}

\usepackage{mdframed}
\usepackage[textsize=tiny]{todonotes}

\icmltitlerunning{Breaking the Synthetic-Real Domain Shortcut for Training-Free Generative Replay-based Class Incremental Learning}

\begin{document}

\twocolumn[
  \icmltitle{Breaking the Synthetic-Real Domain Shortcut for\\Training-Free Generative Replay-based Class Incremental Learning}

  % It is OKAY to include author information, even for blind submissions: the
  % style file will automatically remove it for you unless you've provided
  % the [accepted] option to the icml2026 package.

  % List of affiliations: The first argument should be a (short) identifier you
  % will use later to specify author affiliations Academic affiliations
  % should list Department, University, City, Region, Country Industry
  % affiliations should list Company, City, Region, Country

  % You can specify symbols, otherwise they are numbered in order. Ideally, you
  % should not use this facility. Affiliations will be numbered in order of
  % appearance and this is the preferred way.
  % \icmlsetsymbol{equal}{*}
  \icmlsetsymbol{corresponding}{*}

  \begin{icmlauthorlist}
    \icmlauthor{Tao Zhang}{hust,dpl}
    \icmlauthor{Qixuan Fan}{hust}
    \icmlauthor{Yiyuan Liang}{hust,dpl}
    \icmlauthor{Yanjie Wang}{hust,dpl}
    \icmlauthor{Song Yan}{ustc}
    \icmlauthor{Tian Tian}{hust,dpl}
    \icmlauthor{Jiahuan Zhou}{bju}
    \icmlauthor{Luxin Yan}{hust,dpl}
    \icmlauthor{Sheng Zhong}{hust,dpl}
    \icmlauthor{Xu Zou}{corresponding,hust,dpl}
    %\icmlauthor{}{sch}
    %\icmlauthor{}{sch}
  \end{icmlauthorlist}

  \icmlaffiliation{hust}{Huazhong University of Science and Technology, Wuhan, China}
  \icmlaffiliation{dpl}{State Key Laboratory of Multispectral Information Intelligent Processing Technology, Wuhan, China}
  \icmlaffiliation{bju}{Wangxuan Institute of Computer Technology, Peking University, Beijing, China}
  \icmlaffiliation{ustc}{University of Science and Technology of China, Anhui, China}

  \icmlcorrespondingauthor{Xu Zou}{zoux@hust.edu.cn}
  % \icmlcorrespondingauthor{Firstname2 Lastname2}{first2.last2@www.uk}

  % You may provide any keywords that you find helpful for describing your
  % paper; these are used to populate the "keywords" metadata in the PDF but
  % will not be shown in the document
  \icmlkeywords{Class-Incremental Learning, Domain Shortcut, Feature Rectification}

  \vskip 0.3in
]

% this must go after the closing bracket ] following \twocolumn[ ...

% This command actually creates the footnote in the first column listing the
% affiliations and the copyright notice. The command takes one argument, which
% is text to display at the start of the footnote. The \icmlEqualContribution
% command is standard text for equal contribution. Remove it (just {}) if you
% do not need this facility.

% Use ONE of the following lines. DO NOT remove the command.
% If you have no special notice, KEEP empty braces:
\printAffiliationsAndNotice{}  % no special notice (required even if empty)
% Or, if applicable, use the standard equal contribution text:
% \printAffiliationsAndNotice{\icmlEqualContribution}

% \maketitle
% \normalem
\begin{abstract}
Class-incremental learning~(CIL) requires models to continuously acquire new knowledge while avoiding catastrophic forgetting.
While exemplar replay is effective, it raises concerns regarding privacy and storage. Thus, generative replay has emerged as a viable alternative, synthesizing old data using frozen pretrained text-to-image~(T2I) models without any extra training.
However, we observe that directly mixing synthetic old-class data with real new-class data during incremental training leads to significant performance degradation. This issue stems from a ``domain shortcut'', where models rely on domain-discriminative features instead of semantic class cues.
To address this, we propose DREAM~(\textbf{\underline{D}}omain-\textbf{\underline{R}}egularized \textbf{\underline{E}}xemplar-free \textbf{\underline{A}}lignment \textbf{\underline{M}}odel), which uses a training-free generator to synthesize old-class data and eliminates domain shortcut via subspace rectification and orthogonal projection, while reinforcing semantic alignment through real-anchored prototype regularization.
Extensive experiments on 4 datasets demonstrate that DREAM outperforms existing exemplar-free CIL methods and achieves state-of-the-art performance.
Our source code is available at \url{https://github.com/Light-ZhangTao/DREAM}.
\end{abstract}    
\begin{figure}[t]
  \begin{center}
    \centerline{\includegraphics[width=\columnwidth]{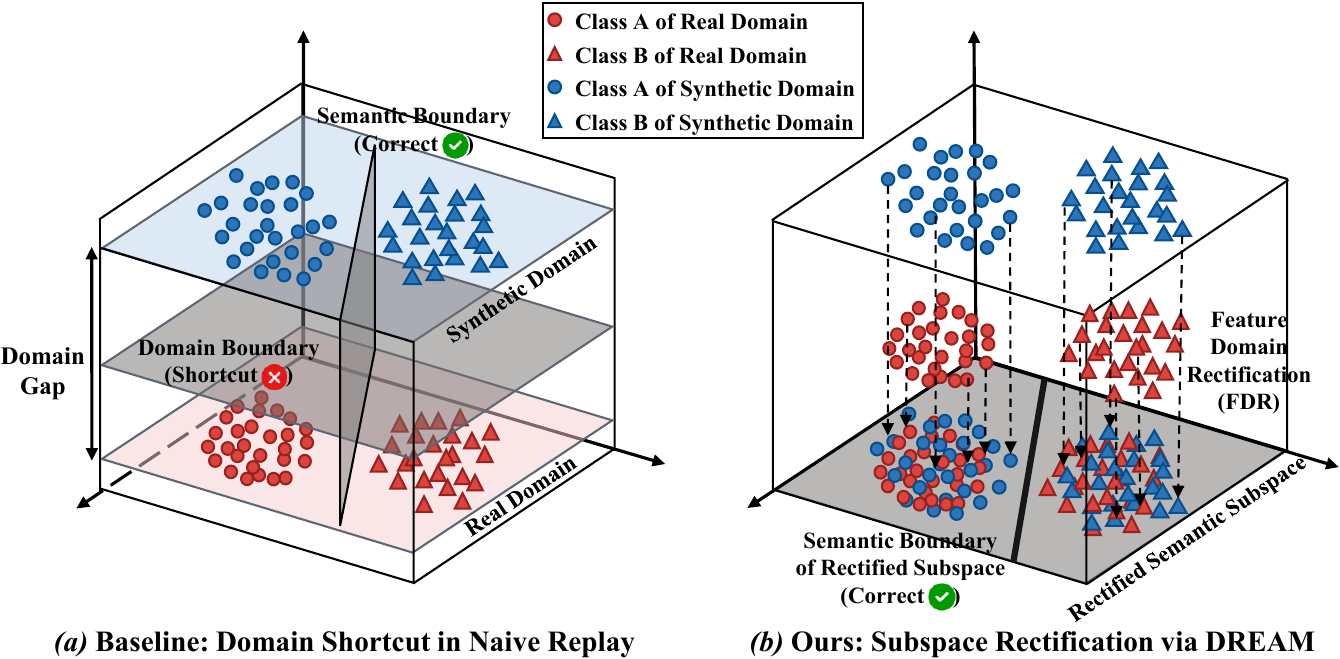}}
    \caption{Illustration of the ``Domain Shortcut'' phenomenon and Our Solution. \textbf{(a) Baseline: }In standard generative replay, a significant domain gap between synthetic old-class data (blue) and real new-class data (red) creates a ``Domain Shortcut''. This gap causes the model to mistakenly treat the real space as the new class space, prioritizing domain separation over semantic class distinction during inference. \textbf{(b) DREAM~(Ours)}: We propose Feature Domain Rectification~(FDR) to excise ``Domain Shortcut'' via Orthogonal Subspace Projection. By projecting features into a rectified semantic subspace, the domain gap is eliminated, forcing the model to learn the true semantic decision boundary (black line) and effectively aligning real and synthetic distributions.
    }
    \label{motivation}
  \end{center}
  \vspace{-10mm}
\end{figure}
\vspace{-4mm}
\section{Introduction}
\label{sec:intro}
Real-world applications demand models capable of Continual Learning~(CL) to handle evolving data streams, yet deep networks suffer from catastrophic forgetting~\cite{mccloskey1989catastrophic, goodfellow2013empirical} when trained sequentially. Consequently, Class-Incremental Learning~(CIL) has gained attention, aiming to continuously acquire new semantic categories while preserving recognition capabilities for old ones.
\begin{figure*}[t]
  \begin{center}
    \centerline{\includegraphics[width=0.98\linewidth]{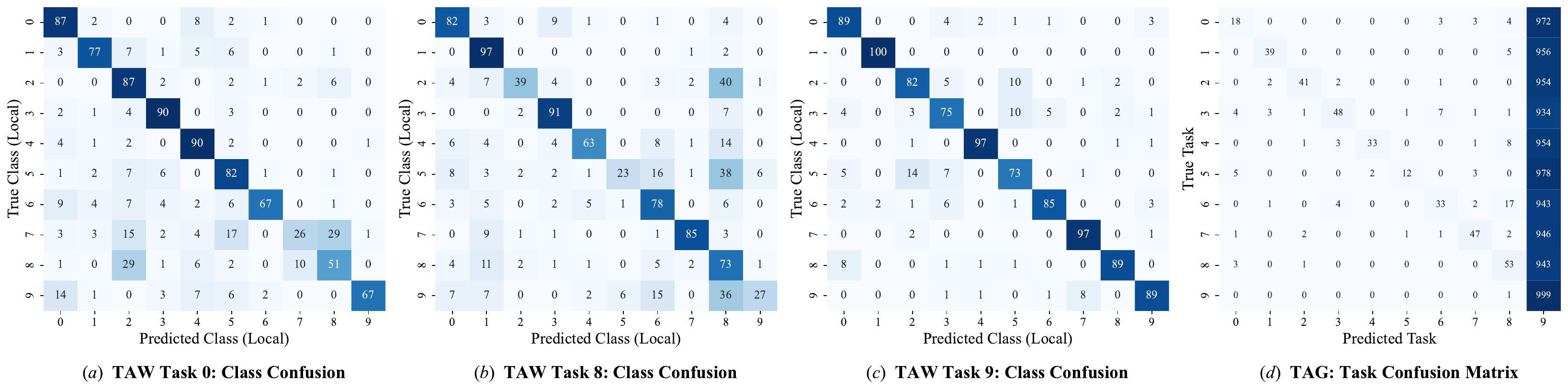}}
    \caption{Visualization of the ``Domain Shortcut'' via Confusion Matrices. \textbf{TAW~(a, b, c)}: Within-task classification remains accurate as class differences outweigh domain differences. \textbf{TAG~(d)}: When mixing tasks, the model overwhelmingly misclassifies old-class real data~(Tasks 0-8) as the current real task~(Task 9), confirming that domain artifacts dominate semantic cues in the decision process.}
    \label{analysis}
  \end{center}
  \vspace{-2mm}
\end{figure*}
While exemplar replay~\cite{rebuffi2017icarl, hou2019learning, douillard2020podnet, wang2022foster} mitigates forgetting by replaying real old data, it raises significant concerns regarding privacy and storage. Exemplar-free CIL~(EFCIL) avoids these issues through distillation~\cite{li2017learning, shi2023prototype} or architectural~\cite{ramesh2022model, grzegorz2024divide} constraints, yet typically underperforms replay-based methods. Generative replay presents a promising alternative by synthesizing old data. However, traditional generative replay methods require the continual training of generators~(\emph{e.g.}, GANs~\cite{shin2017continual} or diffusion models~\cite{gao2023ddgr}), leading to high computational costs and generator forgetting. While recent approaches attempt to update the generator via LoRA~\cite{meng2024diffclass}, they still require training multiple adapter modules per stage or class, resulting in mediocre training efficiency and increased parameter overhead.

Recent advances using synthetic data for model training have been used to enhance the performance of deep learning models~\cite{tian2023stablerep, fan2024scaling, jiang2025synbalance}.
Motivated by this, we explore using a training-free generator to synthesize old-class data for EFCIL. To generate high-quality data, we employ an Attribute-Aware Generative Replay Strategy: we extract detailed image captions using a Vision-Language Model and feed them to the training-free generator to synthesize old-class images that closely match the original object and background. \textbf{This approach appears to effectively address both the challenge of unavailable old data and the need for frequent training of traditional generators in EFCIL. But that is not the case.}

When applying this paradigm to incremental training—by mixing synthetic old-class data with real new-class data—we observe a significant degradation in Task-Agnostic Accuracy~(TAG), while Task-Aware Accuracy (TAW) remains strong. \textbf{We attribute this issue not to insufficient semantic richness in synthetic data, but to domain bias in the feature space}. When real new data and synthetic old data are trained simultaneously, the synthetic data retains the semantic information necessary to resist forgetting. However, domain bias between real and synthetic data causes the model to also learn domain-specific features, leading it to mistakenly treat all real data as new tasks, resulting in a ``\textbf{Domain Shortcut}''.
As shown in Figure~\ref{motivation}(a), the ``Domain Shortcut'' arises because inter-domain discrepancies outweigh class discrepancies. While class boundaries are well-preserved within the same domain~(maintaining high TAW), the large gap between real and synthetic domains collapses the global decision boundary. Consequently, real old-class samples are misclassified into new classes due to their domain attributes, which is further illustrated in Figure~\ref{analysis}, where TAW remains high, but TAG collapses heavily. \textbf{These observations suggest that the primary bottleneck of generative replay is not data quality, but the ``Domain Shortcut'' caused by domain-induced feature bias.}

To address this problem, we propose DREAM~(\textbf{\underline{D}}omain-\textbf{\underline{R}}egularized \textbf{\underline{E}}xemplar-free \textbf{\underline{A}}lignment \textbf{\underline{M}}odel), a feature rectification framework that explicitly eliminates the ``Domain Shortcut''. As shown in Figure~\ref{motivation}(b), DREAM projects features into a semantic subspace via Feature Domain Rectification (FDR). By using Dynamic Domain Centering and Hierarchical Subspace Rectification, real and synthetic data are aligned in the feature subspace, compelling the model to classify based on semantic rather than domain cues. Additionally, Real-Anchored Prototype Consolidation (RAPC) aligns the semantic distributions of real and synthetic data, further reducing domain drift. These mechanisms enable stable and efficient EFCIL using a training-free generator.

We summarize our contributions as follows:
\begin{itemize}
    \item We explore a training-free generative replay paradigm for EFCIL, enabling effective incremental learning without the computational overhead or parameter costs associated with training additional generators.
    \item     We reveal the ``\textbf{Domain Shortcut}'' phenomenon in generative replay-based EFCIL as the main failure mode of training-free generators, causing models to exploit domain cues instead of semantic information.
    \item We propose DREAM, a feature rectification framework that explicitly excises ``Domain Shortcut'' caused by domain-dominant directions via Feature Domain Rectification (FDR) and aligns distributions via Real-Anchored Prototype Consolidation (RAPC).
    \item Extensive experiments on CIFAR-10, CIFAR-100, ImageNet-Subset, and Tiny-ImageNet demonstrate that DREAM outperforms existing methods, establishing a new insight for synthetic-data-assisted CIL.
\end{itemize}

\section{Related Work}
\label{sec:relatedwork}
%-------------------------------------------------------------------------
\textbf{Class-Incremental Learning.}
Class-Incremental Learning (CIL) aims to continuously learn new classes without forgetting old knowledge. A widely adopted method to mitigate catastrophic forgetting is exemplar replay~\cite{rebuffi2017icarl, yan2021dynamically}, which stores real old data, but raises privacy and storage concerns. As a result, Exemplar-Free CIL~(EFCIL) has been actively researched~\cite{zhu2021prototype, petit2023fetril}, typically using knowledge distillation~\cite{li2017learning, shi2023prototype} or model expansion~\cite{grzegorz2024divide} to preserve old knowledge. Alternatively, generative replay methods synthesize old data to approximate exemplar replay. Early works used GAN-based generators~\cite{shin2017continual, wu2018memory}, while more recent methods employ diffusion models for improved stability and sample quality~\cite{gao2023ddgr, meng2024diffclass}. However, most generative replay methods require retraining of generators, increasing training complexity and parameter costs. In contrast, our approach leverages the zero-shot generation capabilities of frozen generative models, synthesizing high-quality old data from rich text prompts, which eliminates the need for frequent generator retraining and addresses stringent data privacy concerns in EFCIL.

%-------------------------------------------------------------------------
\textbf{Pre-trained Generative Models for Data Synthesis.}
Pre-trained generative models exhibit strong data synthesis capabilities~\cite{rombach2022high} and have been widely adopted for data augmentation~\cite{zhou2023training}, long-tailed learning~\cite{jiang2025synbalance}, and few-shot learning~\cite{ma2025instruct}. Previous work shows that synthetic data can enrich data distributions and improve downstream performance~\cite{nguyen2023dataset, alimisis2025advances, Qiu_2024_CVPR}.
These methods typically assume a static learning setup where synthetic and real data coexist symmetrically and play similar semantic roles. However, in EFCIL, this paradigm introduces an asymmetry: old classes only have synthetic data, while real data correspond only to new classes. This disrupts learning dynamics, leading to a critical failure mode known as the ``Domain Shortcut''.

%-------------------------------------------------------------------------
\textbf{Feature Alignment and Subspace Learning.}
Domain Adaptation (DA) aligns distributions via statistical matching~\cite{long2015learning} or adversarial learning~\cite{ganin2016domain}. However, standard DA typically requires concurrent access to source data, rendering it infeasible for EFCIL. Parallelly, subspace learning methods in CIL, such as Orthogonal Gradient Descent~\cite{farajtabar2020orthogonal} and Gradient Projection Memory~\cite{saha2021gradient}, leverage orthogonal projections to minimize inter-class interference. Yet, these methods utilize orthogonality primarily to preserve knowledge rather than eliminate domain discrepancies. Unlike standard alignment approaches that assume static domain definitions, the generative replay setting presents a unique challenge where domain bias is structurally coupled with class increments. Addressing this, we diverge from gradient-based constraints and propose a geometric rectification strategy that explicitly identifies and excises domain-sensitive subspaces to enforce semantic invariance.

\section{Theoretical Analysis}
Due to the domain discrepancy between synthetic and real data, directly mixing generated old-class data with real new-class data for training induces a ``Domain Shortcut'', causing real old-class data to be misclassified as new tasks during inference. We formalize this phenomenon via feature decomposition. We assume that the feature $z \in \mathbb{R}^D$ extracted by $f_\theta$ decomposes into semantic and domain components:
$z = z_{sem}(y) + z_{dom}(d) + \epsilon$,
where $z_{sem}(y)$ encodes class semantics, $z_{dom}(d)$ captures domain bias with $d \in {real, syn}$, and $\epsilon$ denotes noise.
In generative replay-based EFCIL, a structural \textbf{asymmetry} arises: old-class data are exclusively synthetic ($P(d=\text{syn}\mid y\in\mathcal{D}{old})=1$), while new-class data are purely real ($P(d=\text{real}\mid y\in\mathcal{D}{new})=1$).

\begin{proposition}[Domain Shortcut Phenomenon]
\label{thm:main}
    Define the domain bias vector as $\Delta_{dom} = \mathbb{E}[z_{dom}(real)] - \mathbb{E}[z_{dom}(syn)]$. When the domain variance significantly outweighs semantic variance ($|\Delta_{dom}|^2 \gg \text{Var}(z_{sem})$, the model prioritizes the $\Delta_{dom}$ direction, misclassifying all real data as a new task. Consequently, real test old-class data ($d=real$) are wrongly classified as the new tasks.
\end{proposition}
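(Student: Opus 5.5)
We make the statement precise in a simple linear-classifier model and then carry out the argument there. Take the feature decomposition $z = z_{sem}(y) + z_{dom}(d) + \epsilon$ with $\epsilon$ zero-mean, isotropic, with variance $\sigma^2$. Let the training distribution have the stated asymmetry: old classes appear only with $d=\text{syn}$ and new classes only with $d=\text{real}$. The object we analyse is the \emph{task-level} decision, i.e.\ old versus new. We model it by the population-optimal linear (Fisher/LDA, equivalently least-squares) discriminant $w^\star$ trained on the mixed data. The class-level decision inside each task is then left to a separate within-task head, which matches the TAW/TAG split observed in Figure~\ref{analysis}.

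\textbf{Step 1: compute the discriminant.} Write the group means as
\[
\mu_{new} = \bar z_{sem}^{new} + \mathbb{E}[z_{dom}(real)], \qquad \mu_{old} = \bar z_{sem}^{old} + \mathbb{E}[z_{dom}(syn)] .
\]
Because of the asymmetry, their difference is
\[
\mu_{new}-\mu_{old} = \Delta_{sem} + \Delta_{dom}, \qquad \Delta_{sem} = \bar z_{sem}^{new}-\bar z_{sem}^{old}.
\]
The Fisher direction is $w^\star \propto \Sigma_W^{-1}(\Delta_{sem}+\Delta_{dom})$. The within-group covariance $\Sigma_W$ is dominated by the semantic spread and the noise, so $\Sigma_W \preceq (\mathrm{Var}(z_{sem})+\sigma^2) I$. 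We also assume the domain component is roughly constant within each domain. Under this assumption $\Delta_{dom}$ contributes little to $\Sigma_W$, and $\|\Sigma_W^{-1}\Delta_{dom}\|$ is at least $\|\Delta_{dom}\|/(\mathrm{Var}(z_{sem})+\sigma^2)$.

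\textbf{Step 2: show the domain direction dominates.} We bound the cosine between $w^\star$ and $\Delta_{dom}$ from below. Since $\|\Delta_{sem}\|^2 \lesssim \mathrm{Var}(z_{sem})$ (the class means lie within the semantic spread), the hypothesis $\|\Delta_{dom}\|^2 \gg \mathrm{Var}(z_{sem})$ gives $\|\Delta_{sem}\|/\|\Delta_{dom}\| \to 0$. Hence
\[
\cos\angle(w^\star,\Delta_{dom}) \ge 1 - O\!\left(\sqrt{\mathrm{Var}(z_{sem})}/\|\Delta_{dom}\|\right).
\]
This makes precise the claim that ``the model prioritizes the $\Delta_{dom}$ direction''.

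\textbf{Step 3: evaluate on real old-class test data.} A real old-class test point has
\[
z = z_{sem}(y_{old}) + z_{dom}(real) + \epsilon .
\]
The threshold $b$ sits at the midpoint of the two training group means. The task score therefore is
\[
w^{\star\top} z - b \approx \tfrac12 w^{\star\top}\Delta_{dom} + w^{\star\top}\bigl(z_{sem}(y_{old})-\tfrac12(\bar z_{sem}^{old}+\bar z_{sem}^{new})\bigr) + w^{\star\top}\epsilon .
\]
The first term is of order $\|\Delta_{dom}\|^2/(\mathrm{Var}(z_{sem})+\sigma^2)$ after normalisation. The remaining terms are mean-zero or bounded by order $\sqrt{\mathrm{Var}(z_{sem})+\sigma^2}$ along $w^\star$. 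Chebyshev's inequality then gives
\[
P(\text{predict new}) \ge 1 - \frac{\mathrm{Var}(z_{sem})+\sigma^2}{\tfrac14\|\Delta_{dom}\|^2(1-o(1))} \longrightarrow 1 .
\]
So real old-class test data are classified into the new task, which is the conclusion of Proposition~\ref{thm:main}.

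\textbf{Main obstacle.} The hard part is Step 1: justifying that the learned classifier behaves like this linear discriminant. A cross-entropy-trained network could in principle find semantic directions too, which would weaken the conclusion. We first restrict the statement to linear heads on frozen features, where the LDA/least-squares correspondence is standard. If a softmax-trained head is required instead, we would appeal to the implicit-bias result that gradient descent on separable data converges to the max-margin direction. In that case we would argue that the max-margin separator of the two groups aligns with $\Delta_{dom}$ whenever $\|\Delta_{dom}\|$ exceeds the semantic radius. We would state the proposition explicitly under these modelling assumptions.
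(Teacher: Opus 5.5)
The gap is in Step~2. From the one-sided bound $\Sigma_W\preceq(\mathrm{Var}(z_{sem})+\sigma^2)I$ you only get a \emph{lower} bound on $\|\Sigma_W^{-1}\Delta_{dom}\|$. To conclude that $w^\star\propto\Sigma_W^{-1}(\Delta_{sem}+\Delta_{dom})$ is nearly parallel to $\Delta_{dom}$, you would also need to control two things. One is $\|\Sigma_W^{-1}\Delta_{sem}\|$, which can be as large as $\|\Delta_{sem}\|/\sigma^2$. The other is how far $\Sigma_W^{-1}$ rotates $\Delta_{dom}$. Fisher whitening up-weights low-variance directions, so the conditioning of $\Sigma_W$ enters, and $\|\Delta_{dom}\|^2\gg\mathrm{Var}(z_{sem})$ alone does not suffice. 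In your own model, with the threshold at the midpoint, the exact mean task score of a real old-class test point is
\[
\tfrac12\,w^{\star\top}(\Delta_{dom}-\Delta_{sem})=\tfrac12\bigl(\Delta_{dom}^\top\Sigma_W^{-1}\Delta_{dom}-\Delta_{sem}^\top\Sigma_W^{-1}\Delta_{sem}\bigr),
\]
since the cross terms cancel. Now take two dimensions with the within-task semantic spread along $e_1$, so $\Sigma_W=\mathrm{diag}(V+\sigma^2,\sigma^2)$. Set $\Delta_{dom}=Le_1$, $\Delta_{sem}=\delta e_2$, $V=\delta^2=1$, $L=10$, $\sigma^2=10^{-3}$. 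Every assumption you state holds, and so do $\Delta_{sem}\perp\Delta_{dom}$ and $\|\Delta_{dom}\|^2\gg\sigma^2$. Yet the mean score is $\tfrac12(100/1.001-1000)\approx-450$, against a standard deviation of about $33$. Moreover, $w^\star\approx(10,1000)$ is nearly orthogonal to $\Delta_{dom}$. So LDA sends real old-class data to the old task, and there is no shortcut.

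There are two smaller problems in Step~3. The second term is not mean-zero: its mean $-\tfrac12 w^{\star\top}\Delta_{sem}$ is exactly the competing semantic term. Step~3 also mixes an unnormalised $w^\star$ (first term) with a unit $w^\star$ (fluctuation terms).

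The paper never meets this issue because its model has no within-group spread at all. Its training features are $y(v_s+v_d)$, so gradient descent on the logistic loss (and likewise a cosine head) can only learn $w\propto v_s+v_d$. The real old-class point $v_d-v_s$ then scores $\|v_d\|^2-\|v_s\|^2$, which is positive once the domain shift exceeds the semantic shift.

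To keep your Fisher/Chebyshev refinement, drop the cosine bound and work directly with the Mahalanobis expression above. It is the exact analogue of the paper's $\|v_d\|^2-\|v_s\|^2$. You then need a geometric hypothesis, one of the following:
\begin{itemize}
\item $\Sigma_W$ is near-isotropic, as in the $\sigma_{sem}^2I$ model of Corollary~\ref{asscov};
\item $\Delta_{dom}$ is orthogonal to the whole span of semantic variation, so that it is an eigenvector of $\Sigma_W$ with eigenvalue $\lambda_{\min}(\Sigma_W)=\sigma^2$.
\end{itemize}
Under either one, $\|\Delta_{dom}\|>\|\Delta_{sem}\|$ already makes the mean score positive. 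For $\|\Delta_{dom}\|\gg\|\Delta_{sem}\|$, Chebyshev then gives $P(\text{predict new})\ge 1-O\bigl(1/\Delta_{dom}^\top\Sigma_W^{-1}\Delta_{dom}\bigr)$. This tends to $1$ only if also $\|\Delta_{dom}\|^2\gg\sigma^2$. The proposition omits this noise condition, so you should state it explicitly.
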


\textbf{Proof} of Proposition~\ref{thm:main} is seen in Appendix~\ref{app:proof_shortcut}.

Since all synthetic data are produced by the same frozen generator, they exhibit the following properties.
\begin{corollary}[Shared Domain Difference Subspace]
\label{asssds}
Since the generator is \textbf{training-free}, the statistical characteristics of the synthetic domain remain time-invariant across all classes. Similarly, the real images share a consistent natural domain distribution. Consequently, the domain gap $\mathcal{S}_{dom}$, arising from the discrepancy between these two stable distributions, is structurally consistent and shared. Formally:
\begin{equation}
    \forall c \in \mathcal{C}_{1:T}, \quad (\mathbb{E}[z|c, real] - \mathbb{E}[z|c, syn]) \in \mathcal{S}_{dom}.
\end{equation}
\end{corollary}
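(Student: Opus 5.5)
The plan is to derive Corollary~\ref{asssds} directly from the feature decomposition $z = z_{sem}(y) + z_{dom}(d) + \epsilon$ introduced before Proposition~\ref{thm:main}. I will make explicit the two modelling assumptions that the corollary states informally. First, the semantic component depends only on the class and not on the domain. Second, the noise has zero mean conditional on $(c,d)$, i.e.\ $\mathbb{E}[\epsilon \mid c, d] = 0$. Under these assumptions the class-conditional mean in domain $d$ is
\begin{equation*}
\mathbb{E}[z \mid c, d] = \mathbb{E}[z_{sem}(c)] + \mathbb{E}[z_{dom}(d) \mid c].
\end{equation*}

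\textbf{Step 1: the domain component does not depend on the class.} This is the substantive step, and it is where the training-free hypothesis enters. Because the generator is frozen, the law of the synthetic images, and hence of $z_{dom}(syn)$, does not change across tasks. I would formalize this as conditional independence of $z_{dom}(d)$ from $c$ given $d$, which gives $\mathbb{E}[z_{dom}(syn) \mid c] = \mathbb{E}[z_{dom}(syn)]$ for every $c \in \mathcal{C}_{1:T}$. I would argue the same for the real domain, using the assumption that natural images share one acquisition distribution, so that $\mathbb{E}[z_{dom}(real) \mid c] = \mathbb{E}[z_{dom}(real)]$.

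\textbf{Step 2: subtract the two means.} The semantic terms cancel, leaving
\begin{equation*}
\mathbb{E}[z \mid c, real] - \mathbb{E}[z \mid c, syn] = \mathbb{E}[z_{dom}(real)] - \mathbb{E}[z_{dom}(syn)] = \Delta_{dom},
\end{equation*}
where $\Delta_{dom}$ is the domain bias vector of Proposition~\ref{thm:main}. The right-hand side is independent of $c$. Setting $\mathcal{S}_{dom} := \mathrm{span}\{\Delta_{dom}\}$, every per-class gap lies in $\mathcal{S}_{dom}$, which is exactly the claim.

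\textbf{Step 3: robustness to the idealization.} The exact decomposition may only hold approximately; in particular, the domain shift could interact mildly with the class. To cover this, I would relax Step 1 so that $z_{dom}(d)$ is allowed to depend on $c$, but only through the span of a fixed set of domain directions $\{u_1,\dots,u_k\}$ that are shared because they come from the same fixed generator and the same real distribution. I would then take $\mathcal{S}_{dom} = \mathrm{span}\{u_1,\dots,u_k\}$. Since each per-class gap is a difference of two vectors in this span, it again lies in $\mathcal{S}_{dom}$. This low-rank version is the one I expect later sections to estimate via the subspace rectification.

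The main obstacle is Step 1. The statement is really a consequence of modelling assumptions rather than a deep deduction, so the proof has to state cleanly which invariance is assumed. The key assumption is that a frozen generator induces a class-independent domain component, or at least one confined to a fixed subspace. The remaining steps are linear algebra.
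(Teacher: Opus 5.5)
Your proposal is correct and follows essentially the same route as the paper. The paper models $z_{syn}^c \approx z_{ideal}^c + \mathcal{A}(\Theta) + \epsilon$ and $z_{real}^c \approx z_{ideal}^c + \epsilon$, so the per-class mean gap reduces to the class-independent frozen-generator artifact $-\mathcal{A}(\Theta)$; this is your Steps 1--2 with the real-domain component set to zero, and the paper backs the invariance assumption empirically (a projector estimated from new classes shrinks old-class real--synthetic centroid distances by about 50\%) rather than through your Step 3 relaxation.
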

\noindent\textbf{Analysis} of Corollary~\ref{asssds} is seen Appendix~\ref{app:assumption_3_2}.

\begin{figure*}[t]
  \begin{center}
    \centerline{\includegraphics[width=\linewidth]{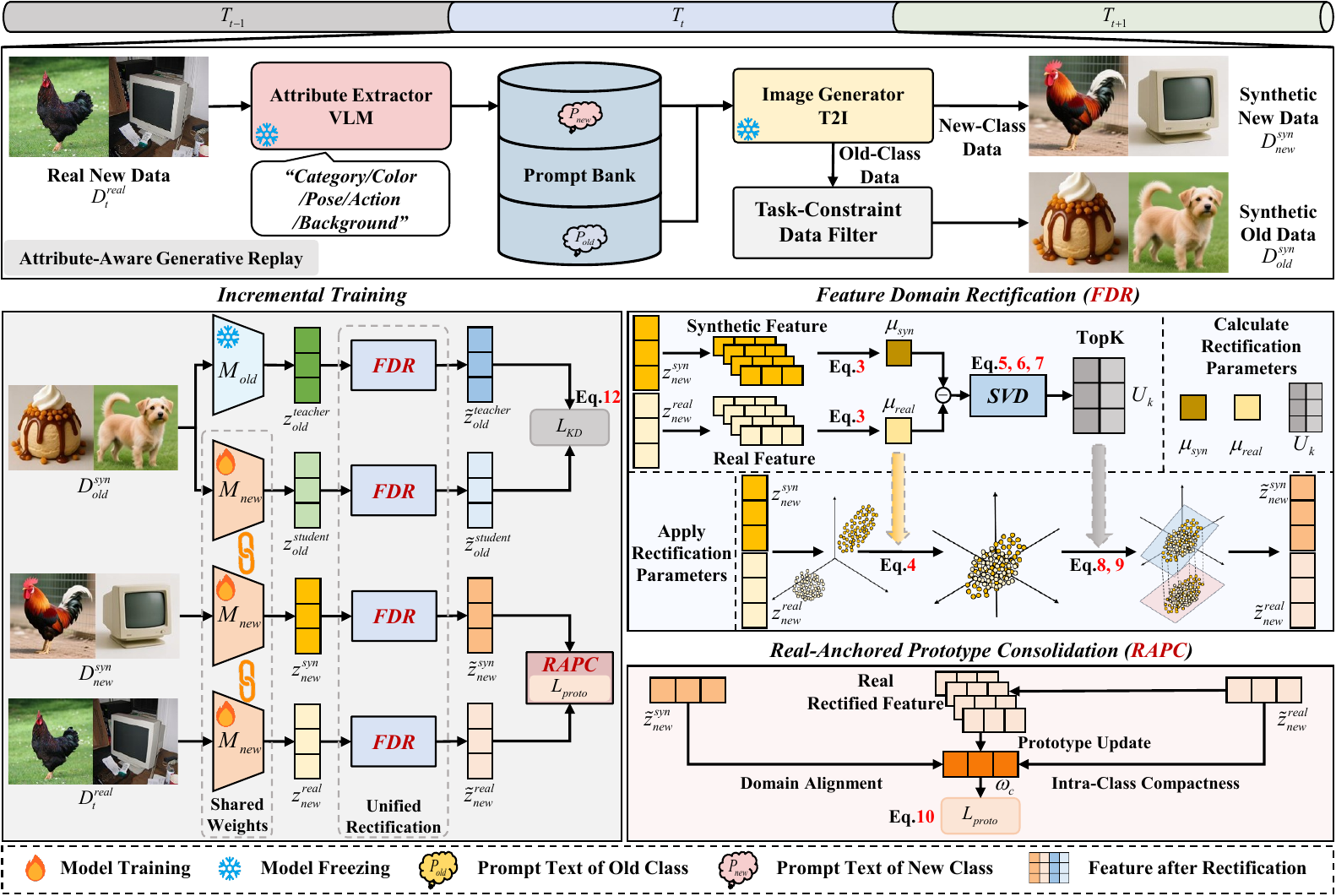}}
    \caption{Overview of DREAM. The pipeline integrates Attribute-Aware Generative Replay~(blue) with a novel training pipeline based on Feature Domain Rectification~(yellow). \textbf{(1) Synthesis}: We utilize a VLM to extract rich attributes and a frozen generator to synthesize diverse old and real data. \textbf{(2) Rectification~(FDR)}: We perform Dynamic Domain Centering and Hierarchical Subspace Rectification to explicitly remove domain-sensitive subspaces (visualized bottom-left). \textbf{(3) Alignment (RAPC)}: A Real-Anchored Prototype Consolidation loss aligns synthetic and real features with real-data anchors to ensure domain alignment and intra-class compactness.}
    \label{method}
  \end{center}
\end{figure*}

\begin{corollary}[Structured Domain Shift vs. Isotropic Intra-class Variance]
\label{asscov}
Attributable to the fixed nature of the training-free generator, the synthesized data exhibits consistent stylistic tendencies. Consequently, the domain bias is not stochastic noise but comprises inherent systematic artifacts. This bias defines a \textbf{structured component} $\Sigma_{dom}$ that dominates specific directions in the feature space. Conversely, intra-class semantic variations (\emph{e.g.}, color, pose, background) are characterized as \textbf{statistically isotropic components} $\sigma_{sem}^2 I$. 
Therefore, the covariance of class-conditional residuals decomposes as:
\begin{equation}
    \text{Cov}(z|c) \approx \Sigma_{dom} + \sigma_{sem}^2 I, \quad \text{s.t. } \lambda_1(\Sigma_{dom}) \gg \sigma_{sem}^2.
\end{equation}
\end{corollary}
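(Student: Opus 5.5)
The plan is to derive the covariance decomposition directly from the additive feature model $z = z_{sem}(y) + z_{dom}(d) + \epsilon$ introduced before Proposition~\ref{thm:main}, with Corollary~\ref{asssds} pinning down the geometry of the domain term. Fix a class $c$ and compute $\text{Cov}(z\mid c)$ under the mixture over domains $d\in\{real, syn\}$ available for that class, collecting real anchors and synthetic replays together. I will assume that $z_{sem}$, $z_{dom}$ and $\epsilon$ are mutually independent given $(c,d)$, so that cross-covariances vanish. The covariance then splits into three parts: a semantic within-class covariance, a domain covariance, and a noise covariance.

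\textbf{Step 1 (the domain term).} By the law of total covariance,
\[
\text{Cov}(z_{dom}\mid c) = \mathbb{E}_d\bigl[\text{Cov}(z_{dom}\mid d)\bigr] + \pi_c(1-\pi_c)\,\Delta_{dom}\Delta_{dom}^\top ,
\]
where $\pi_c$ is the real-data fraction for class $c$. Because the generator is frozen, $\text{Cov}(z_{dom}\mid d)$ is class-independent. By Corollary~\ref{asssds}, the mean shift lies in the shared subspace $\mathcal{S}_{dom}$. I therefore define $\Sigma_{dom}$ as this sum, which is supported mainly on $\mathcal{S}_{dom}$ and has leading eigenvalue at least $\pi_c(1-\pi_c)\|\Delta_{dom}\|^2$. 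This lower bound gives the inequality $\lambda_1(\Sigma_{dom})\gg\sigma_{sem}^2$ as soon as we invoke the regime $\|\Delta_{dom}\|^2\gg \text{Var}(z_{sem})$ assumed in Proposition~\ref{thm:main}.

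\textbf{Step 2 (isotropy of the semantic and noise terms).} I will argue that intra-class nuisance factors (pose, colour, background), after averaging over many independent attributes and random rotations of the representation, satisfy $\text{Cov}(z_{sem}\mid c)+\text{Cov}(\epsilon)\approx \sigma_{sem}^2 I$. The formal device is to take $\sigma_{sem}^2=\tfrac1D\operatorname{tr}$ of this sum. The anisotropic remainder is then absorbed into the approximation symbol, and I will bound it by the spread of its eigenvalues.

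\textbf{Main obstacle.} Step 2 is the hard part. Isotropy is not a consequence of the earlier results; it is essentially a modelling hypothesis. I expect to state it explicitly as an assumption, in the sense that no single semantic direction carries variance comparable to $\|\Delta_{dom}\|^2$. Under that assumption the approximation error is dominated by the spectral gap to $\lambda_1(\Sigma_{dom})$. I would also cite empirical spectra of residuals to support it, and the statement is to be read as holding in this approximate, modelling sense rather than as an exact identity.
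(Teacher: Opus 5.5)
Your argument is sound at the level of rigor the statement admits, but it decomposes a different object from the one the paper uses.

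\textbf{The paper's route.} It works with the Micro-SVD residuals $r_i = z_i^{real} - \mu_c^{syn}$ of real samples about the synthetic class centroid. It expands their \emph{uncentered} second moment, $\mathbb{E}[r_i r_i^\top] = \Delta_{dom}\Delta_{dom}^\top + \mathbb{E}[s_i s_i^\top] + (\text{cross terms})$, where $s_i$ is the zero-mean semantic fluctuation. The cross terms vanish because $\Delta_{dom}$ is constant and $s_i$ is zero-mean and uncorrelated with it. The isotropy $\mathbb{E}[s_i s_i^\top]\approx\sigma_{sem}^2 I$ is simply posited ``for mathematical tractability''. The condition $\lambda_1(\Sigma_{dom})\gg\sigma_{sem}^2$ is not derived; it is supported by the empirical singular-value spectrum.

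\textbf{Your route.} You take the centered covariance of the pooled real and synthetic data of class $c$. The domain term then appears as between-domain scatter $\pi_c(1-\pi_c)\Delta_{dom}\Delta_{dom}^\top$ via the law of total covariance.

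\textbf{What each buys.} The paper's route matches the method directly. $\Sigma_{dom}=\Delta_{dom}\Delta_{dom}^\top$ is exactly rank one with full weight $\|\Delta_{dom}\|^2$. Since adding $\sigma_{sem}^2 I$ does not rotate eigenvectors, the leading direction $v_c$ of $R_c$ is exactly $\Delta_{dom}/\|\Delta_{dom}\|$ under the model. Calling this second moment a ``covariance'' is loose, though: centering $r_i$ would annihilate the rank-one term, as the paper's own discussion of uncentered SVD concedes. Your route gives three things:
\begin{itemize}
\item fidelity to the literal $\text{Cov}(z\mid c)$;
\item a more general model that allows a stochastic domain component;
\item an explicit Rayleigh-quotient bound $\lambda_1(\Sigma_{dom})\ge\pi_c(1-\pi_c)\|\Delta_{dom}\|^2$, which turns the $\gg$ condition into a consequence of the regime of Proposition~\ref{thm:main} rather than an empirical observation.
\end{itemize}

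\textbf{Costs you should state.}
\begin{itemize}
\item The bound degenerates as $\pi_c\to 0$ or $1$. It therefore only has content for current-task classes that carry both real and synthetic samples; replayed old classes have $\pi_c=0$.
\item The within-domain term $\mathbb{E}_d[\text{Cov}(z_{dom}\mid d)]$ can tilt the top eigenvector away from $\Delta_{dom}$ unless you assume it is negligible. Your claim that it is ``supported mainly on $\mathcal{S}_{dom}$'' does not follow from Corollary~\ref{asssds}, which constrains only mean differences.
\item Independence given $(c,d)$ does not by itself kill the cross-covariances of the pooled mixture. You also need the law of $z_{sem}$ given $c$ (at least its mean) to be domain-invariant. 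Otherwise a term $\pi_c(1-\pi_c)(\delta\Delta_{dom}^\top+\Delta_{dom}\delta^\top)$ survives, with $\delta=\mathbb{E}[z_{sem}\mid c,real]-\mathbb{E}[z_{sem}\mid c,syn]$.
\end{itemize}
The paper relies on the same invariance implicitly, since its $\bar z_{sem,c}$ is shared by both domains. So the last point only requires stating the hypothesis; it is not a defect of your approach.
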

\noindent\textbf{Analysis} of Corollary~\ref{asscov} is seen in Appendix~\ref{app:assumption_3_3}.

These corollaries motivate estimating the domain subspace from paired new-class data, enabling the removal of real–synthetic domain gap across old and new classes.
\section{Methodology}

\textbf{Preliminaries.}
We focus on EFCIL under strict privacy constraints. The learning process consists of $T$ sequential stages. At stage $t$, the model only has access to the real data of new classes for the current task, denoted as $\mathcal{D}_{t}^{real} = \{(x, y) | y \in \mathcal{C}_t\}$, without access to any real old-class data. To mitigate catastrophic forgetting, we introduce a frozen pre-trained text-to-image generator $G$ to synthesize data.

\subsection{Attribute-Aware Generative Replay.}
To obtain high-quality synthetic data, we use an Attribute-Aware Generative Replay strategy. We employ Qwen2.5-VL~\cite{bai2025qwen2} to extract rich semantic descriptions ($T_y$) from real images, covering \textit{category}, \textit{color}, \textit{pose}, \textit{background}, and \textit{action}, which are stored in a Prompt Bank.

\textbf{For Old Classes:} We retrieve prompts $T_{old}$ from the bank and feed them into $G$ to generate the synthetic old-class data $\mathcal{D}_{old}^{syn}$, applying ``Task-Constraint Data Filtering'' to ensure high data quality.
\textbf{For New Classes:} We also use $G$ to generate synthetic data for new classes, $\mathcal{D}_{new}^{syn}$, to assist with subsequent feature rectification~(Sec.~\ref{FDR} and~\ref{RAPC}).
The details are seen in Algorithm~\ref{alg:data_generation}.
Consequently, the training set at stage $t$ comprises three parts: real new data $\mathcal{D}_{new}^{real}$, synthetic new data $\mathcal{D}_{new}^{syn}$, and synthetic old data $\mathcal{D}_{old}^{syn}$.

\subsection{Feature Domain Rectification~(FDR)}
\label{FDR}
To block the shortcut, we must break the asymmetry by removing the influence of $z_{dom}(d)$ from the features. DREAM uses paired $\mathcal{D}_{new}^{real}$ and $\mathcal{D}_{new}^{syn}$ to explicitly estimate and eliminate the domain subspace, as shown in Figure~\ref{method}. Since $\mathcal{D}_{new}^{real}$ and $\mathcal{D}_{new}^{syn}$ share the same semantic subspace but belong to different domains, they provide an opportunity to isolate domain differences without semantic contamination.

Based on Corollary~\ref{asssds}, we can use the projection matrix from the new classes and generalize it to the old classes.

\textbf{Dynamic Domain Centering.}
We first eliminate the first-order statistic~(mean) shift. We maintain the real mean $\mu_{real}$ and synthetic mean $\mu_{syn}$ of the current task's new classes using Exponential Moving Average before each epoch:
\begin{equation}
    \begin{split}
        \mu_{real} &= \frac{1}{|\mathcal{C}_t|} \sum_{c \in \mathcal{C}_t} \frac{1}{N_{c}^{real}} \sum_{z \in \mathcal{D}_{c}^{real}} z, \\
        \mu_{syn} &= \frac{1}{|\mathcal{C}_t|} \sum_{c \in \mathcal{C}_t} \frac{1}{N_{c}^{syn}} \sum_{z \in \mathcal{D}_{c}^{syn}} z.
    \end{split}
\end{equation}
We perform centering based on its source $d$ for any $z$, :
\begin{equation}
    z' = z - ( \mathbb{I}_{[d=real]} \mu_{real} + \mathbb{I}_{[d=syn]} \mu_{syn} ).
\end{equation}
It aligns the geometric centers of the two distributions to the origin, removing the translational component in $\Delta_{dom}$.

\textbf{Hierarchical Subspace Rectification.}
To further excise higher-order discrepancies, we propose a hierarchical Singular Value Decomposition (SVD) strategy to identify and excise these domain-sensitive directions.

Corollary~\ref{asscov} ensures that the principal component extracted via SVD accurately locks onto the consistent domain drift direction, rather than capturing semantic variations.

\textit{Class-wise Drift Mining~(Micro-SVD)}.
To accurately extract domain difference directions, we need to eliminate the interference of $z_{sem}(y)$. For each category $c \in \mathcal{C}_t$ in the new task, we first compute the center of synthetic data $\bar{\mu}_{c}^{syn} = \mathbb{E}_{x \sim \mathcal{D}_{new}^{syn}|c}[f(x)]$. Then, we calculate the residual of the real sample $z_{i, c}^{real}$ relative to the synthetic center:
\begin{equation}
\label{residual}
\begin{split}
    r_{i, c} &= z_{i, c}^{real} - \bar{\mu}_{c}^{syn} \\
    &\approx \underbrace{(z_{dom}^{real} - z_{dom}^{syn})}_{\text{Consistent Domain Shift}} + \underbrace{(z_{sem, i} - \bar{z}_{sem})}_{\text{Intra-class Var}} + \underbrace{(\epsilon_{i} - \bar{\epsilon})}_{\text{Noise}}.
\end{split}
\end{equation}
Then, we perform SVD on the set $R_c = \{r_{i, c}\}_{i=1}^{N_c}$:
\begin{equation}
    R_c = U_c \Sigma_c V_c^T.
\end{equation}

According to Corollary~\ref{asscov}, the residual energy concentrates along the domain direction and is dispersed across semantic dimensions. Thus, we extract the first principal component $v_c$, which captures the dominant domain drift while suppressing isotropic semantic variations and random noise.

\textit{Shared Subspace Aggregation (Macro-SVD)}.
To obtain a universal domain subspace applicable to all classes~(including unseen old classes) under the current model, we aggregate the drift directions of all new classes into a matrix $M = [v_{c_1}, v_{c_2}, \dots, v_{c_{|\mathcal{C}_t|}}]$ and perform a second SVD:
\begin{equation}
    M = U_{all} \Sigma_{all} V_{all}^T.
\end{equation}
We select the top $k$ column vectors of $U_{all}$ to form the basis matrix $U_{k} \in \mathbb{R}^{D \times k}$. According to Corollary~\ref{asssds}, this shared subspace $\mathcal{S}_{dom}$ can generalize to cover the domain bias of all new and old class data.

\textit{Soft Orthogonal Projection}.
To preserve semantic integrity while mitigating domain bias, we construct a soft projection:
\begin{equation}
P = I - \gamma U_{k} U_{k}^T,
\end{equation}

In the high-dimensional feature space of deep networks, domain and semantics often exhibit non-trivial coupling rather than perfect orthogonality. Forcing a complete projection of features onto the orthogonal complement of $\mathcal{S}_{dom}$ might discard semantic information that is structurally embedded within the domain-sensitive subspace, leading to a drop in accuracy. Therefore, we adopt a soft projection to suppress the magnitude of domain artifacts rather than eliminating them so that it no longer dominates the distance metric, effectively enhancing the feature's signal-to-noise ratio (SNR) without disrupting the underlying semantic manifold. The final rectified feature is:
\begin{equation}
    \tilde{z} = P z'.
\end{equation}
Through this projection, the feature space is mapped to a ``Domain-Agnostic Manifold'', compelling the model to rely on the preserved semantic features $z_{sem}$ for decision-making, thereby breaking the ``Domain Shortcut''.

\begin{table*}[ht]
    \centering
    \caption{\textbf{Comparison of Task-agnostic Average Incremental Accuracy (\%) across four benchmarks.} We compare DREAM with exemplar-free (\emph{e.g.}, SEED), exemplar-based~(\emph{e.g.}, T-CIL), model inversion (\emph{e.g.}, R-DFCIL), and generative replay methods (\emph{e.g.}, DiffClass). DREAM consistently achieves the best performance across all datasets and settings among exemplar-free approaches, while attaining performance comparable to exemplar-based methods. Best results are shown in \textbf{bold}, and second-best results are \underline{underlined}. Missing results are denoted by “–”, as some methods do not report the corresponding settings or do not release their implementations.
    }
    \label{tab:main_results}
    \resizebox{\textwidth}{!}{
    \begin{tabular}{c|l|cc|ccc|ccc|ccc}
        \toprule
        \multirow{2}{*}{\textbf{Type}} & \multirow{2}{*}{\textbf{Methods}} & \multicolumn{2}{c|}{CIFAR-10} & \multicolumn{3}{c|}{CIFAR-100} & \multicolumn{3}{c|}{ImageNet-Subset} & \multicolumn{3}{c}{TinyImageNet}\\
        \cline{3-13}
         & & T=2 & T=5 & T=5 & T=10 & T=20 & T=5 & T=10 & T=20 & T=5 & T=10 & T=20\\
        \midrule
        \multirow{2}{*}{\textbf{Baselines}}
        & Joint & 93.27 & 91.05 & 73.61 & 70.45 & 71.45 & 80.46 & 77.13 & 76.41 & 60.84 & 58.55 & 54.34\\
        & Fine-Tuning &70.28&42.84&35.12&24.72&15.89&38.44&26.29&17.18&28.21&20.06&12.82\\
        \midrule
        \multirow{7}{*}{\textbf{Exemplar-Free}} 
        & EWC~\cite{kirkpatrick2017overcoming} &72.10&42.24&39.95&25.29&17.05&41.36&27.74&17.38&29.61&20.47&14.10\\
        & LwF~\cite{li2017learning} &79.94&56.28&41.71&30.69&19.87&52.55&39.27&26.87&32.02&22.47&15.26\\
        & PASS~\cite{zhu2021prototype} & 63.72 & 59.75 & 63.31 & 52.01 & 41.84 & 55.75 & 33.75 & 27.30 &39.55&30.03&18.64\\
        & IL2A~\cite{zhu2021class} & 63.42 & 51.66 & 58.67 & 43.28 & 40.54 & 62.66 & 43.46 & 35.59 &36.56&32.38&16.70\\
        & SSRE~\cite{zhu2022self} &62.84& 51.52 & 56.96 & 43.41 & 31.07 & 52.25 & 46.00 & 34.96 &33.23&28.82&16.22\\
        & FeTrIL~\cite{petit2023fetril} & 83.47 & 66.59 & 58.68 & 47.14 & 37.25 & 58.40 & 46.44 & 37.64 & 40.46 & 30.77 &23.68\\
        & SEED~\cite{grzegorz2024divide} & \underline{87.89} & 74.03 & 63.05 & 62.04 & 57.42 & 69.08 & 67.55 & 62.26 & \underline{51.16}& \underline{48.77} & \underline{39.68}\\
        \midrule
        \multirow{2}{*}{\textbf{Exemplar-Based}}
        & T-CIL~\cite{hwang2025t} & - & 65.86 & - & 56.25 & - & - & - & - & - & 31.48 & -\\
        & T-CIL+DER~\cite{hwang2025t} & - & 74.93 & - & 69.98 & - & - & - & - & - & 47.79 & -\\
        \midrule
        \multirow{5}{*}{\shortstack{\textbf{Exemplar-Free}\\\textbf{Generative Replay}}}
        & ABD~\cite{smith2021always} & 84.07 & 72.12 & 60.78 & 54.00 & 43.32 & 67.12 & 57.06 & 45.75 & 45.80 & 41.59 & 35.65\\
        & R-DFCIL~\cite{gao2022r} & 85.90 & 74.98 & 64.67 & 59.18 & 49.76 & 68.42 & 59.36 & 49.99 & 49.36 & 44.54 & 39.52\\
        & DiffClass~\cite{meng2024diffclass} & - & - & \underline{69.77} & \underline{68.05} & \underline{67.10} & \underline{74.85} & \underline{73.87} & \underline{72.51} & - & - & -\\
        & AHR~\cite{nori2025autoencoder} & - & \underline{77.12} & - & 54.43 & - & - & - & - & - & - & -\\
        & \textbf{DREAM (Ours)} & \textbf{91.08} & \textbf{88.98} & \textbf{71.55} & \textbf{69.73} & \textbf{68.48} & \textbf{78.37} & \textbf{76.57} & \textbf{74.43} & \textbf{55.54} & \textbf{52.92} & \textbf{51.00}\\
        \bottomrule
    \end{tabular}
    }
\end{table*}

\subsection{Real-Anchored Prototype Consolidation (RAPC)}
\label{RAPC}
Although FDR aligns the feature space, the semantic distribution of synthetic data may still drift, and real data might become loose due to rectification. To further align domains and enhance intra-class compactness, we propose a Real-Anchored Prototype Consolidation strategy. We maintain a set of ``Real Prototypes'' $\Omega = \{\omega_c\}_{c=1}^{|\mathcal{C}|}$, updated using Exponential Moving Average of the rectified features of real new-class data. According to Corollary~\ref{asssds}, the old-class domain is also optimized.
The loss function is defined as:
\begin{equation}
\begin{aligned}
    \mathcal{L}_{Proto} = {} & 
    \underbrace{1 - \frac{1}{|B_{syn}|} \sum_{(x,y) \in B_{syn}} cos(\tilde{z}, \omega_y)}_{\text{Term A: Domain Alignment}} \\
    & + \underbrace{1 - \frac{1}{|B_{real}|} \sum_{i \in B_{real}} cos(\tilde{z}, \omega_y)}_{\text{Term B: Intra-Class Compactness}}.
\end{aligned}
\end{equation}
\textbf{Term A} forces synthetic data to move closer to the real prototype, patching residual domain gaps. \textbf{Term B} ensures that the real data itself remains compact. 

\subsection{Optimization Objective}
\label{sec:optimization}
The model is optimized in the \textit{rectified} feature space. The total loss $\mathcal{L}$ is defined as:
\begin{equation}
    \mathcal{L} = \mathcal{L}_{CE} + \lambda_{KD}\mathcal{L}_{KD} + \lambda_{Proto}\mathcal{L}_{Proto}.
\end{equation}
$\mathcal{L}_{CE}$ is the Cross-Entropy loss computed on $\mathcal{D}_{t}$, $\mathcal{D}_{syn\_new}$ and $\mathcal{D}_{syn\_old}$. Crucially, we employ \textbf{Rectified Feature Distillation} to prevent forgetting:
\begin{equation}
    \mathcal{L}_{KD} = 1 - cos(\tilde{z}^{student}, \tilde{z}^{teacher}).
\end{equation}
Both student and teacher features are rectified using their respective stage-specific projectors, ensuring knowledge transfer robust to domain shifts across incremental stages.
\section{Experiments}
\label{sec:experiments}
\subsection{Experimental Setup}
\textbf{Datasets.}
To evaluate the effectiveness of DREAM and ensure fair comparison, we conduct experiments on four widely adopted benchmark datasets in CIL: CIFAR-10, CIFAR-100~\cite{krizhevsky2009learning}, ImageNet-Subset~\cite{deng2009imagenet}, and TinyImageNet~\cite{le2015tiny}. CIFAR-10 and CIFAR-100 consist of $32 \times 32$ images with 10 and 100 classes, respectively. Both datasets contain 50,000 training images and 10,000 test images. ImageNet-Subset-100 is constructed by selecting 100 classes from ImageNet-1k based on a fixed random seed~(1993). TinyImageNet comprises 200 classes with $64 \times 64$ images, which includes 100,000 training samples and 10,000 test samples.

\textbf{Incremental Setting.}
Unlike the setting in~\cite{zhang2024diffusion,liu2020generative}, where the first task contains more classes than subsequent tasks, we evaluate in a broadly adopted and more challenging equal-split setting, where each task has the same class number, as in established EFCIL protocols~\cite{gao2022r, grzegorz2024divide, meng2024diffclass}. For CIFAR-10, we split into $T=2$ or $5$ tasks. For CIFAR-100, ImageNet-Subset, and TinyImageNet, we split the classes equally into $T=5$, $10$, or $20$ tasks.
For fairness, all methods are evaluated using the same fixed class order. Following previous works~\cite{petit2023fetril, grzegorz2024divide}, we report the Task-agnostic \textbf{Average Incremental Accuracy ($\mathcal{A}_{avg}$)}:
\begin{equation}
    \mathcal{A}_{avg} = \frac{1}{T} \sum_{t=1}^{T} \mathcal{A}_t,
\end{equation}
defined as the mean of the accuracy achieved after each incremental phase, and $\mathcal{A}_t$ denotes the classification accuracy on the test set of all seen classes after learning task $t$.

\textbf{Implementation Details.}
Following previous works~\cite{gao2022r, grzegorz2024divide, meng2024diffclass}, we employ a modified \textbf{32-layer ResNet}~\cite{he2016deep} as the backbone for CIFAR-10/100 and TinyImageNet and use the standard \textbf{ResNet-18} for ImageNet-Subset. The comparative results are from exiting paper or reproducing results using FACIL~\cite{masana2022class} and PyCIL~\cite{zhou2023pycil} benchmarks.
All models are trained from scratch. Regarding the generative replay framework, we utilize the frozen \textbf{Qwen-Image}~\cite{wu2025qwen} as image generators, with text prompts derived from \textbf{Qwen2.5-VL}~\cite{bai2025qwen2}. 
\begin{figure*}[t]
  \begin{center}
    \centerline{\includegraphics[width=\linewidth]{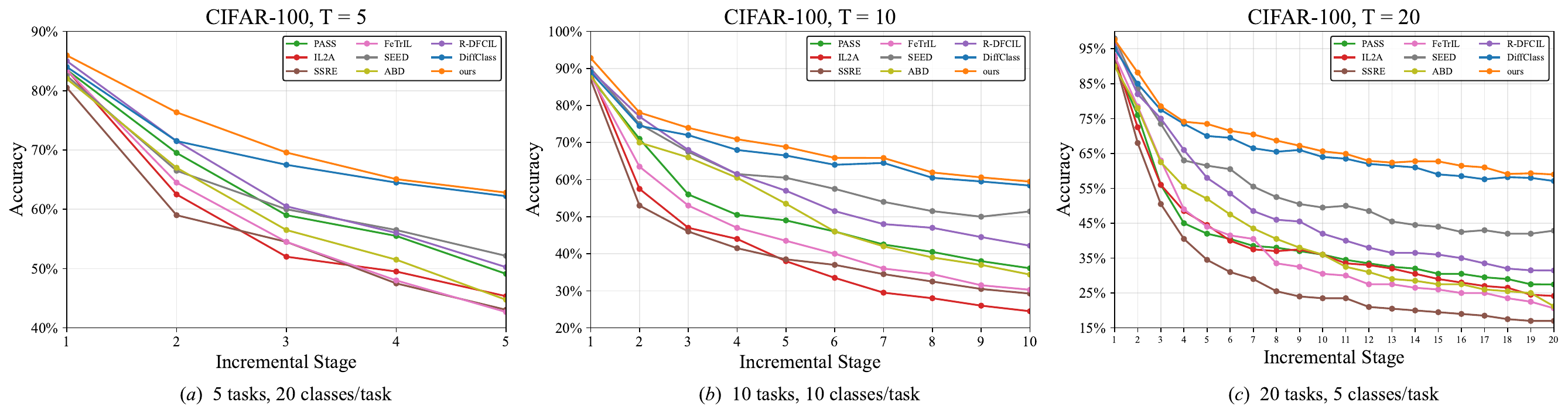}}
    \caption{
    Task-agnostic average accuracy curves on CIFAR-100. Results for (a) $T=5$ tasks, (b) $T=10$ tasks, and (c) $T=20$ tasks show that DREAM~(orange curve) consistently outperforms state-of-the-art exemplar-free methods.}
    \label{cifar100}
  \end{center}
\end{figure*}

\begin{figure*}[t]
  \begin{center}
    \centerline{\includegraphics[width=\linewidth]{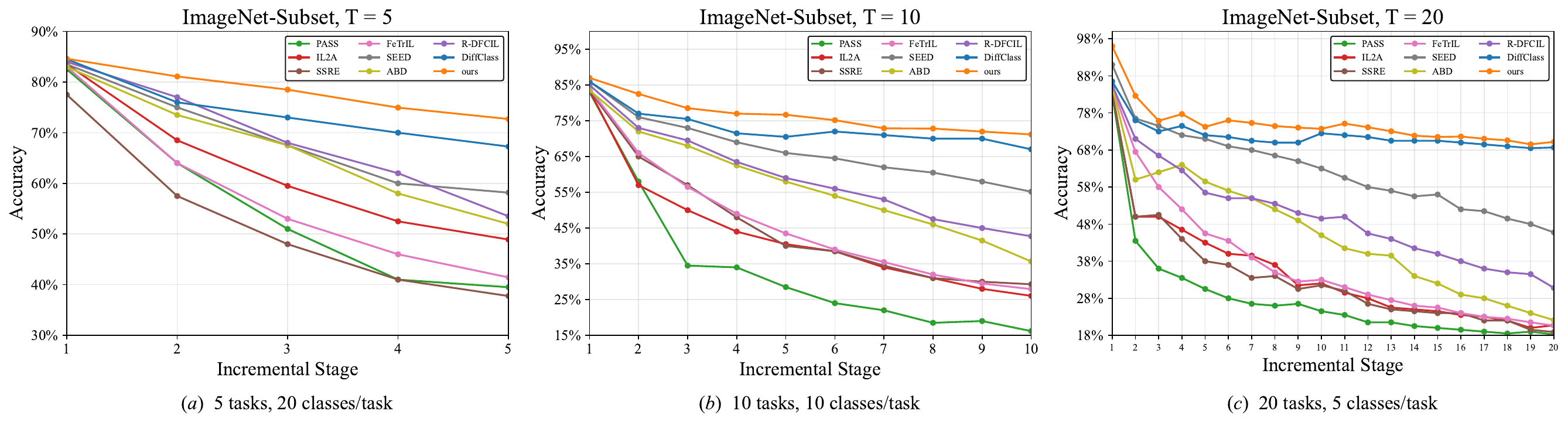}}
    \caption{
    Task-agnostic average accuracy curves on ImageNet-Subset. Results for (a) $T=5$ tasks, (b) $T=10$ tasks, and (c) $T=20$ tasks demonstrate that DREAM~(orange curve) consistently outperforms state-of-the-art exemplar-free methods.}
    \label{imagenet}
  \end{center}
\end{figure*}
\begin{figure}[t]
    \centering
    \includegraphics[width=0.95\linewidth]{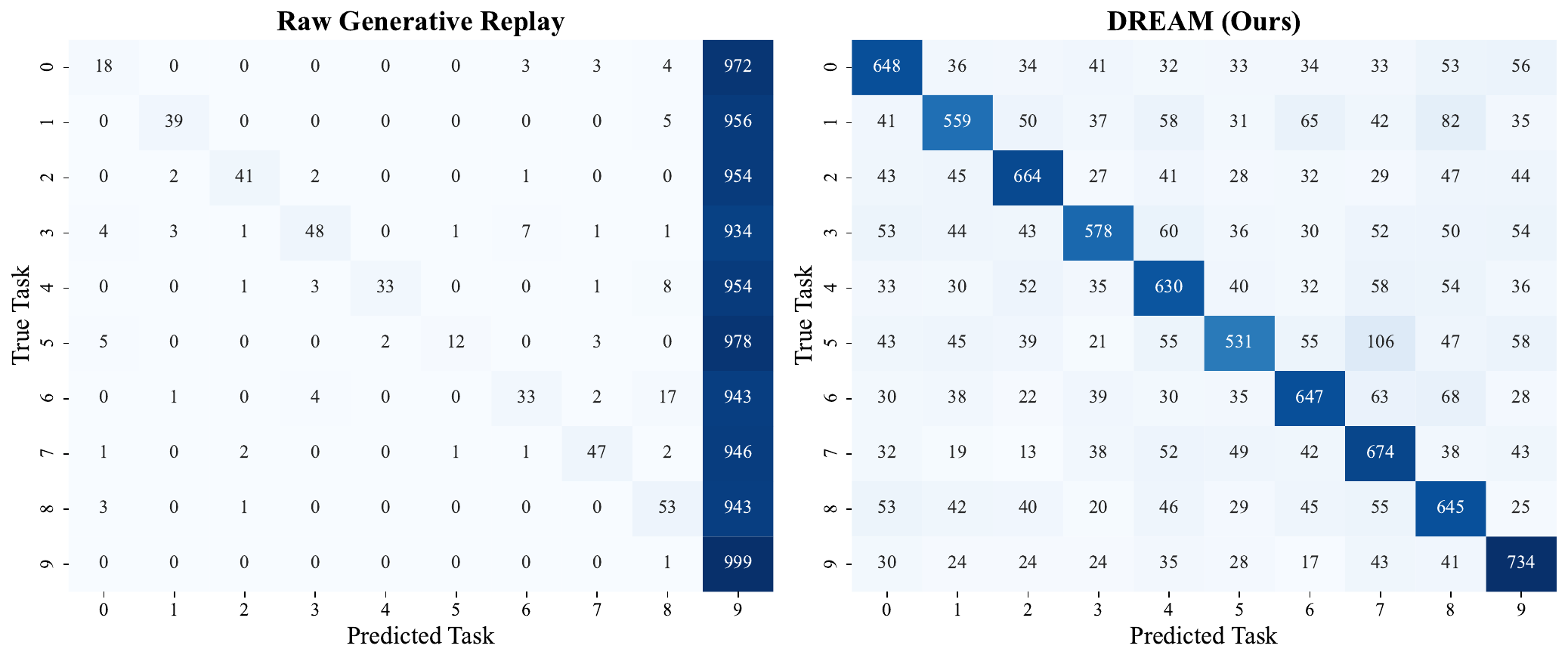}
    \caption{Confusion Matrices on CIFAR-100. \textbf{Left}: The baseline exhibits a severe ``vertical'' bias, misclassifying old data as the new task due to the ``domain shortcut''. \textbf{Right}: DREAM eliminates this bias, restoring the diagonal structure and ensuring reliance on semantic cues, effectively breaking the ``Domain Shortcut''.}
    \label{fig:confusion}
    \vspace{-4mm}
\end{figure}
\begin{figure}[t]
    \centering
    \includegraphics[width=0.95\linewidth]{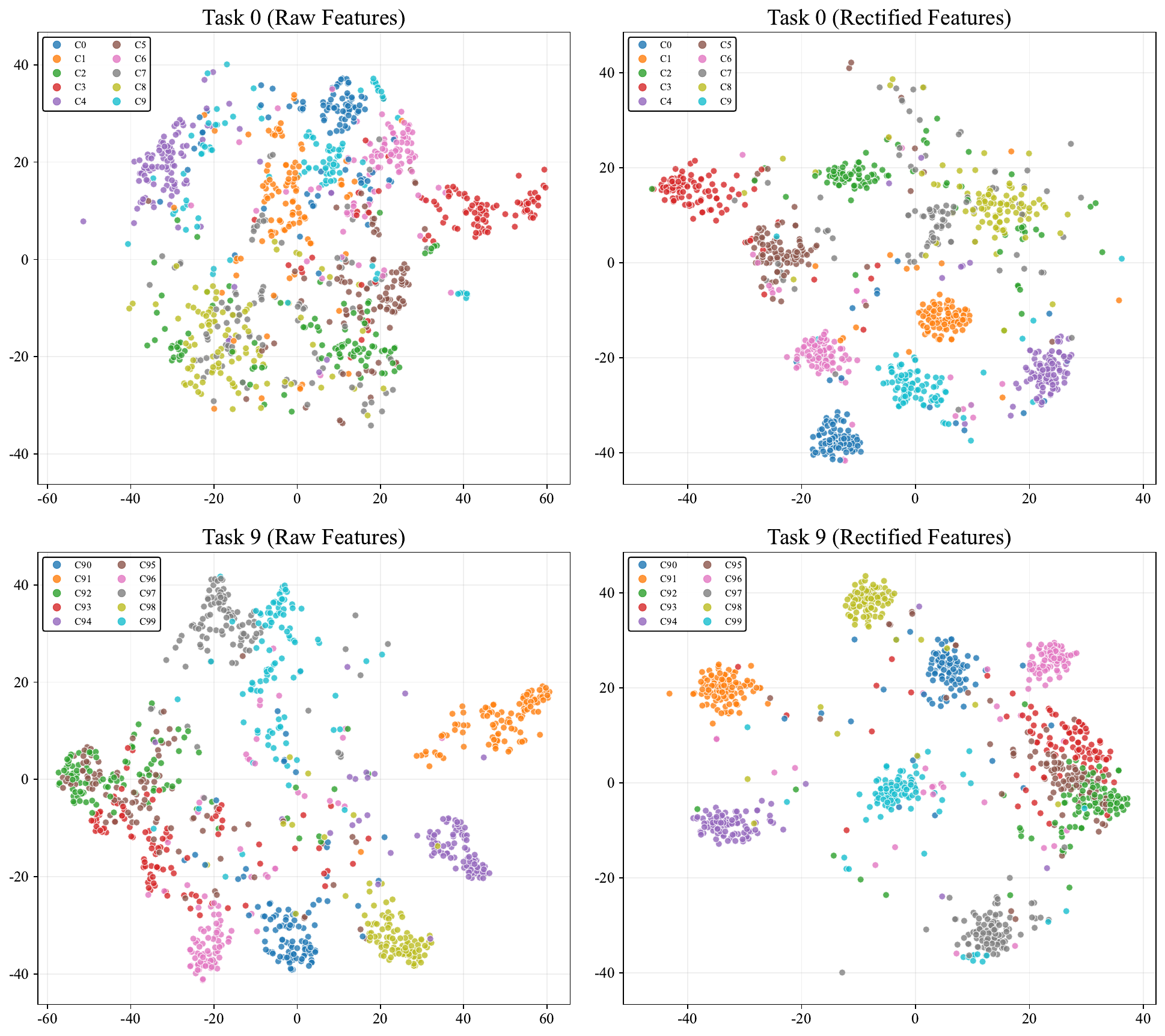}
    \caption{Intra-task t-SNE Visualization on CIFAR-100. We visualize feature spaces for Task 0 (top) and Task 9 (bottom). The left column (Raw) shows dispersed distributions, while the right column (Rectified) shows more compact and clustered representations, validating that our rectified distillation enhances intra-class compactness and semantic separability.}
    \label{fig:tsne}
\end{figure}

\subsection{Results and Analysis}
We evaluate DREAM across 4 benchmark datasets under Exemplar-Free, Exemplar-Based, and Generative Replay CIL settings. As shown in Table \ref{tab:main_results}, DREAM outperforms all baselines in the EFCIL setting and achieves results comparable to the exemplar-based method.

\textbf{Analysis on CIFAR-10.}
DREAM achieves near-perfect performance on CIFAR-10 in both $T=2$ and $T=5$ settings. Notably, at $T=5$, DREAM surpasses SEED by 14.95\% (74.03\% for SEED), highlighting the critical role of the ``Domain Shortcut'' in simpler tasks. Our rectification method effectively addresses this challenge, confirming its efficacy on simpler distributions.

\textbf{Analysis on CIFAR-100.}
On CIFAR-100, DREAM outperforms all methods across varying task sequences. In the 5/10/20-task settings, our method achieves 71.55\%, 69.73\%, 68.48\%, surpassing DiffClass (69.77\%, 68.05\%, 67.10\%) and R-DFCIL (64.67\%, 59.18\%, 49.76\%). Furthermore, DREAM exhibits a massive advantage over traditional EFCIL methods like SEED (63.05\%, 62.04\%, 57.42\%). As illustrated in Figure \ref{cifar100}, DREAM maintains a robust trajectory throughout the process, demonstrating stability and effective error mitigation, especially in long-sequence scenarios.

\textbf{Analysis on ImageNet-Subset.}
On ImageNet-Subset ($224\times224$), DREAM shows superior robustness to both traditional EFCIL and generative methods. In the 5/10/20-task settings, our method achieves 78.37\%, 76.57\%, 74.43\%, outperforming DiffClass (74.85\%, 73.87\%, 72.51\%) and R-DFCIL (68.42\%, 59.36\%, 49.99\%). The performance gap widens significantly against traditional EFCIL methods like SEED (62.26\%), surpassing by 12.17\% at $T=20$. As shown in Figure \ref{imagenet}, while traditional EFCIL methods exhibit a steep decline due to the inability to rehearse old patterns, DREAM follows a much flatter trajectory. This confirms the synergy of our framework: the data synthesized by the training-free generator ensures high-fidelity semantic coverage, while DREAM effectively rectifies the domain bias in complex high-resolution images.

\textbf{Analysis on Tiny-ImageNet.}
On Tiny-ImageNet with 200 classes, DREAM achieves 55.54\%, 52.92\%, 51.00\% accuracy, outperforming R-DFCIL (49.36\%, 44.54\%, 39.52\%) and SEED (51.16\%, 48.77\%, 39.68\%). This validates that DREAM is also effective in dense semantic spaces, preserving distinct decision boundaries where traditional EFCIL methods fail to retain class discriminability.
\begin{table}[t]
    \centering
    \caption{Component Effectiveness Analysis. We incrementally add Centering, HSR, and RAPC to the baseline~(the same data and just use $\mathcal{L}_{CE}$ and $\mathcal{L}_{KD}$) to validate the contribution of each module.}
    \label{tab:ablation_component}
    \footnotesize
    \begin{tabular}{ccc|c}
        \toprule
        \textbf{Centering} & \textbf{HSR} & \textbf{RAPC} & \textbf{Acc (\%)}\\
        \midrule
        \multicolumn{3}{c|}{Baseline (Raw Generative Replay)} & 33.27\\
        \checkmark & & & 41.02\\
        & \checkmark & & 58.46\\
        \checkmark & \checkmark & & 67.69\\
        \checkmark & \checkmark & \checkmark & \textbf{69.73}\\
        \bottomrule
    \end{tabular}
    \vspace{-4mm}
\end{table}

\subsection{Visualization Analysis}
To intuitively validate our framework, we analyze classification behavior and feature quality on CIFAR-100 ($T=10$). As shown in the confusion matrices (Figure \ref{fig:confusion}), the baseline (left) exhibits a severe ``vertical'' bias, misclassifying old real samples as part of the current real task, confirming the presence of the "Domain Shortcut." In contrast, DREAM (right) restores the diagonal structure, eliminating the domain bias and addressing the ``Domain Shortcut''.
Additionally, Figure \ref{fig:tsne} presents the t-SNE visualization for tasks 0 and 9. Raw features (left) show dispersed distributions with loose boundaries for both old and new classes within a task, while DREAM (right) produces much more compact and clustered representations. This enhanced compactness demonstrates that our method effectively enforces stricter semantic separability within tasks.

\subsection{Ablation Study}
\textbf{Component Effectiveness.}
In Table~\ref{tab:ablation_component}, we validate the contribution of each module by adding them to the raw generative replay baseline~(just use $\mathcal{L}_{CE}$ and $\mathcal{L}_{KD}$) on CIFAR-100 ($T=10$). The baseline yields poor accuracy (33.27\%), confirming that the domain shortcut causes severe optimization conflicts. Aligning first-order statistics via ``Centering'' improves accuracy by 7.75\%. Introducing HSR to reduce high-variance domain directions further increases accuracy to 67.69\%, supporting our hypothesis that the domain shortcut exists in specific subspaces. Finally, RAPC refines the domain alignment and ensures compactness, reaching 69.73\%.

\begin{table}[t]
    \centering
    \setlength{\tabcolsep}{17pt}
    \caption{Sensitivity Analysis of FDR Hyperparameters. We evaluate the impact of rectification strength $\gamma$ and subspace dimension $k$ on model performance. Default settings are bolded.}
    \label{tab:ablation_gamma_k}
    \footnotesize
    
    \begin{subtable}{0.48\linewidth}
        \centering
        \caption{Rectification Strength $\gamma$}
        \begin{tabular}{c|c} 
            \toprule
            $\gamma$ & Acc (\%) \\
            \midrule
            0.6 & 65.92 \\
            0.7 & 68.06 \\
            \textbf{0.8} & \textbf{69.73} \\
            0.9 & 65.59 \\
            1.0 & 56.92 \\
            \bottomrule
        \end{tabular}
    \end{subtable}
    \hfill
    \begin{subtable}{0.48\linewidth}
        \centering
        \caption{Subspace Dimension $k$}
        \begin{tabular}{c|c}
            \toprule
            $k$ & Acc (\%) \\
            \midrule
            1 & 66.74 \\
            3 & 67.96 \\
            \textbf{5} & \textbf{69.73} \\
            8 & 64.68 \\
            10 & 62.02 \\
            \bottomrule
        \end{tabular}
    \end{subtable}
\end{table}

\begin{table}[t]
    \centering
    \setlength{\tabcolsep}{14pt}
    \caption{Sensitivity Analysis of Loss Weights. We assess the stability of the model under varying weights for prototype loss ($\lambda_{Proto}$) and distillation loss ($\lambda_{KD}$).}
    \label{tab:ablation_lambda}
    \footnotesize
    \begin{subtable}{0.4\linewidth}
        \centering
        \caption{Proto Weight $\lambda_{Proto}$}
        \begin{tabular}{c|c} 
            \toprule
            $\lambda_{Proto}$ & Acc (\%) \\
            \midrule
            0 & 67.69 \\
            0.05 & 68.76 \\
            \textbf{0.10} & \textbf{69.73} \\
            0.50 & 68.52 \\
            1.00 & 68.02 \\
            \bottomrule
        \end{tabular}
    \end{subtable}
    \hfill
    \begin{subtable}{0.48\linewidth}
        \centering
        \caption{KD Weight $\lambda_{KD}$}
        \begin{tabular}{c|c}
            \toprule
            $\lambda_{KD}$ & Acc (\%) \\
            \midrule
            0.1 & 66.61 \\
            0.5 & 68.60 \\
            \textbf{1.0} & \textbf{69.73} \\
            5.0 & 68.25 \\
            10.0 & 63.78 \\
            \bottomrule
        \end{tabular}
    \end{subtable}
    \vspace{-4mm}
\end{table}

\textbf{Hyperparameter Sensitivity.}
We investigate the impact of Feature Domain Rectification (FDR) settings in Table~\ref{tab:ablation_gamma_k}. Regarding rectification strength, a ``Soft Projection'' ($\gamma=0.8$) achieves the optimal trade-off; notably, ``Hard Projection'' ($\gamma=1.0$) causes a sharp performance drop to 56.92\%, suggesting that the domain-sensitive subspace contains entangled semantic information that should be suppressed rather than deleted. Additionally, the method proves robust to the choice of subspace dimension $k$, peaking at $k=5$.

We also analyze the sensitivity of the loss components $\lambda_{Proto}$ and $\lambda_{KD}$ in Table~\ref{tab:ablation_lambda}. Model performance remains stable across a reasonable range of weights, with $\lambda_{Proto}=0.1$ and $\lambda_{KD}=1.0$ yielding the highest accuracy (69.73\%). This combination effectively aligns domains, ensuring sufficient knowledge transfer from the teacher model.

\textbf{Robustness to Generator and Filtering.}
\begin{table}[t]
    \centering
    \small
    \caption{Impact of generator and filtering. DREAM remains effective across different generators, and filtering further improves performance by removing semantically inconsistent generations.}
    \label{tab:ablation_gen_filter_main}
    \resizebox{\columnwidth}{!}{
    \begin{tabular}{lccc}
        \toprule
        Generator & Filtering & Data Vol. & Acc. (\%) \\
        \midrule
        \multirow{2}{*}{SD1.5~\cite{rombach2022high}}
        & w/o Filter & 500 & 68.08 \\
        & w/ Filter  & 425 & 68.84 \\
        \midrule
        \multirow{2}{*}{Qwen-Image~\cite{wu2025qwen}}
        & w/o Filter & 500 & 68.67 \\
        & w/ Filter  & 468 & \textbf{69.73} \\
        \bottomrule
    \end{tabular}
    }
\end{table}
We further examine whether the performance of DREAM depends on the choice of generator or the filtering strategy.
As shown in Table~\ref{tab:ablation_gen_filter_main}, DREAM achieves comparable performance with SD1.5 and Qwen-Image, obtaining 68.84\% and 69.73\% after filtering, respectively. This small gap indicates that the improvement mainly comes from DREAM rather than the use of a stronger generator. Filtering consistently improves both generators by removing low-quality or semantically inconsistent data. Notably, DREAM with SD1.5 still achieves 68.84\%, which is competitive with the Qwen-Image setting and validates that our method is robust to the generator and does not rely on a specific high-fidelity generator. Some extra analyses are presented in the Appendix~\ref{app:exp_synth_quality}.

\textbf{Robustness to Multi-Source Generators.}
\begin{table}[t]
    \centering
    \caption{Robustness to multi-source generators. DREAM consistently improves over raw generative replay even when replay data are synthesized from heterogeneous generators.}
    \label{tab:multi_source_generator}
    \resizebox{\columnwidth}{!}{
    \begin{tabular}{llcc}
        \toprule
        Type & Generator Source & w/o DREAM & DREAM \\
        \midrule
        \multirow{4}{*}{Multi-source}
        & 50\% SD1.5 + 50\% Qwen & 37.92 & 68.53 \\
        & 50\% Qwen + 50\% SD1.5 & \textbf{40.37} & 67.87 \\
        & Random mix & 39.33 & 67.77 \\
        \cmidrule(lr){2-4}
        & Avg & 39.21 & 68.06 \\
        \midrule
        \multirow{3}{*}{Single-source}
        & SD1.5 & 32.65 & 68.84 \\
        & Qwen-Image & 33.27 & \textbf{69.73} \\
        \cmidrule(lr){2-4}
        & Avg & 32.96 & 69.29 \\
        \bottomrule
    \end{tabular}
    }
\end{table}
To examine whether DREAM depends on a single-source generator, we conduct heterogeneous replay experiments on CIFAR-100 ($T=10$) by mixing SD1.5 and Qwen-Image with a 50/50 class split. We consider three mixed-source settings: the first half of classes generated by SD1.5 and the second half by Qwen-Image, the reverse split, and a random 50/50 class split. As shown in Table~\ref{tab:multi_source_generator}, DREAM consistently improves multi-source replay from 39.21\% to 68.06\%. Interestingly, raw multi-source replay performs better than raw single-source replay (39.21\% vs. 32.96\%), suggesting that generator diversity can partially dilute fixed synthetic artifacts, but the performance remains poor without explicit rectification. After applying DREAM, single-source replay is slightly better than multi-source replay (69.29\% vs. 68.06\%), indicating that coherent artifacts from one frozen generator form a more compact domain subspace that is easier to remove via SVD, while heterogeneous generators introduce multi-modal biases. Nevertheless, the small 1.23\% drop under multi-source replay demonstrates the robustness of DREAM to heterogeneous generator sources.

\section{Conclusion}
\label{sec:conclusion}

In this work, we identify the ``Domain Shortcut'' as a primary bottleneck in training-free generative replay EFCIL. To address this, we propose \textbf{DREAM}~(\textbf{\underline{D}}omain-\textbf{\underline{R}}egularized \textbf{\underline{E}}xemplar-free \textbf{\underline{A}}lignment \textbf{\underline{M}}odel) that integrates \textbf{Feature Domain Rectification (FDR)} and \textbf{Real-Anchored Prototype Consolidation (RAPC)}. By explicitly excising domain-sensitive subspaces and aligning synthesis and real features, DREAM effectively break the ``Domain Shortcut'' and mitigates catastrophic forgetting. Extensive experiments on CIFAR-10, CIFAR-100, ImageNet-Subset, and Tiny-ImageNet demonstrate that our method outperforms SOTA baselines. Future work will explore adapting the generator's embeddings to further enhance diversity for classes.

\section*{Acknowledgements}
This work was supported by the National Natural Science Foundation of China under Grant U24B20139, the Hubei Provincial Natural Science Foundation of China under Grant 2026AFA040, the National Key R\&D Program of China (2024YFA1410000), and the National Natural Science Foundation of China (62376011).
This work was a research achievement of National Engineering Research Center of New Electronic Publishing Technologies.
The computation was completed on the HPC Platform at Huazhong University of Science and Technology.
\section*{Impact Statement}
This paper presents work whose goal is to advance privacy-preserving Class-Incremental Learning. There are many potential societal consequences of our work, including enabling secure AI evolution in privacy-sensitive domains (e.g., healthcare) and promoting Green AI by eliminating generator retraining costs; however, we emphasize the need for vigilance regarding the potential propagation of societal biases inherent in pre-trained foundation models.
% {
%     \small
%     \normalem
%     \bibliographystyle{ieeenat_fullname}
%     \bibliography{main}
% }

% In the unusual situation where you want a paper to appear in the
% references without citing it in the main text, use \nocite
% \nocite{langley00}

\bibliography{main}
\bibliographystyle{icml2026}

%%%%%%%%%%%%%%%%%%%%%%%%%%%%%%%%%%%%%%%%%%%%%%%%%%%%%%%%%%%%%%%%%%%%%%%%%%%%%%%
%%%%%%%%%%%%%%%%%%%%%%%%%%%%%%%%%%%%%%%%%%%%%%%%%%%%%%%%%%%%%%%%%%%%%%%%%%%%%%%
% APPENDIX
%%%%%%%%%%%%%%%%%%%%%%%%%%%%%%%%%%%%%%%%%%%%%%%%%%%%%%%%%%%%%%%%%%%%%%%%%%%%%%%
%%%%%%%%%%%%%%%%%%%%%%%%%%%%%%%%%%%%%%%%%%%%%%%%%%%%%%%%%%%%%%%%%%%%%%%%%%%%%%%
\newpage
\clearpage
\appendix
\onecolumn
\section{Theoretical Proofs and Empirical Verification}
\label{app:theory}

In this section, we provide rigorous mathematical proofs for the propositions and corollaries presented in the main manuscript. Furthermore, we provide empirical evidence to validate these theoretical corollaries, ensuring that our geometric interpretation of the ``Domain Shortcut'' holds in practice.

\subsection{Proof of Proposition~\ref{thm:main}: The Domain Shortcut Mechanism}
\label{app:proof_shortcut}
\textbf{Proposition~\ref{thm:main} Restated.} \textit{Define the domain bias vector as $\Delta_{dom} = \mathbb{E}[z_{dom}(real)] - \mathbb{E}[z_{dom}(syn)]$. When the domain variance significantly outweighs semantic variance ($|\Delta_{dom}|^2 \gg \text{Var}(z_{sem})$, the model prioritizes the $\Delta_{dom}$ direction, misclassifying all real data as a new task. Consequently, real test old-class data ($d=real$) are wrongly classified as the new tasks.}

\textbf{Proof.} 
We verify this proposition by analyzing the gradient descent dynamics and inference behavior under two common classifier architectures: the Linear Classifier and the Cosine Classifier.

\subsubsection{Case I: Linear Classifier}
\textbf{1. Problem Setup.}
Consider a simplified binary classification task at incremental stage $t$ with classes $y \in \{-1, +1\}$, where $y=-1$ represents Old Classes (Synthetic) and $y=+1$ represents New Classes (Real).
According to the \textbf{Feature Decomposition Hypothesis} (Eq. 1 in the main manuscript), the feature $z$ is a superposition of semantic and domain components:
\begin{equation}
    z = z_{sem}(y) + z_{dom}(d).
\end{equation}
For the binary case, we define the unit semantic direction as $v_{s}$ and the unit domain direction as $v_{d}$. The domain bias vector corresponds to $\Delta_{dom} \approx 2 \cdot v_{d}$ (difference between real and synthetic), and the semantic margin corresponds to $\Delta_{sem} \approx 2 \cdot v_{s}$.
Due to the generator-based replay setting (Asymmetry), the training data exhibits perfect correlation between label $y$ and domain $d$:
\begin{equation}
    z_{train} = y \cdot v_{s} + y \cdot v_{d} = y(v_{s} + v_{d}).
\end{equation}
A linear classifier computes the logit $s = w^T z$. The logistic loss is $\mathcal{L}(w) = \log(1 + \exp(-y w^T z))$.

\textbf{2. Gradient Descent Update.}
We compute the gradient of the loss $\mathcal{L}$ with respect to the weight vector $w$:
\begin{equation}
    \nabla_w \mathcal{L} = \frac{\partial \mathcal{L}}{\partial s} \frac{\partial s}{\partial w} = \frac{-y \exp(-y w^T z)}{1 + \exp(-y w^T z)} z = -(1 - p) y z,
\end{equation}
where $p$ is the predicted probability of the true class.
The update rule with learning rate $\eta$ at step $k$ is:
\begin{equation}
    w^{(k+1)} \leftarrow w^{(k)} - \eta \nabla_w \mathcal{L} = w^{(k)} + \eta (1 - p) y z.
\end{equation}
Substituting $z_{train} = y(v_{s} + v_{d})$, the accumulated update direction becomes:
\begin{equation}
    \Delta w \propto y \cdot [y(v_{s} + v_{d})] = y^2 (v_{s} + v_{d}) = v_{s} + v_{d}.
\end{equation}
After convergence, the weight vector $w$ aligns with the summation of feature components:
\begin{equation}
    w^* \approx c \cdot (v_{s} + v_{d}), \quad \text{for some } c > 0.
\end{equation}

\textbf{3. Inference Failure.}
Consider a \textbf{Real Test Sample} from an \textbf{Old Class}:
\begin{itemize}
    \item True Label: $y_{test} = -1$ (Old).
    \item Domain: $d_{test} = +1$ (Real).
    \item Feature: $z_{test} = (-1)v_{s} + (+1)v_{d} = v_{d} - v_{s}$.
\end{itemize}
The prediction score is:
\begin{equation}
    s_{test} = (w^*)^T z_{test} \propto (v_{s} + v_{d})^T (v_{d} - v_{s}).
\end{equation}
Expanding the dot product (assuming orthogonality $v_{s}^T v_{d} = 0$ as per Corollary~\ref{asscov}):
\begin{equation}
    s_{test} \propto \|v_{d}\|^2 - \|v_{s}\|^2 = \frac{1}{4}(\|\Delta_{dom}\|^2 - \|\Delta_{sem}\|^2).
\end{equation}
\textbf{Conclusion:} If the domain variance significantly outweighs the semantic variance (\emph{i.e.}, $\|\Delta_{dom}\| \gg \|\Delta_{sem}\|$), then $s_{test} > 0$. The model predicts $y=+1$ (New Class), ignoring the semantic cue.

\subsubsection{Case II: Cosine Classifier}
\textbf{1. Problem Setup.}
Modern CIL methods use Cosine Classifiers to mitigate magnitude bias. The logit is $s = \tau \frac{w^T z}{\|w\|\|z\|}$. For analysis, we assume normalized inputs ($\|z\|=1$) and weights ($\|w\|=1$).
The loss is $\mathcal{L} = -\log \frac{e^{\tau w_y^T z}}{\sum_j e^{\tau w_j^T z}}$.

\textbf{2. Gradient Descent Update.}
The gradient for the target class weight $w_y$ is:
\begin{equation}
    \nabla_{w_y} \mathcal{L} = -(1 - p_y) \tau (z - (w_y^T z)w_y).
\end{equation}
The update pushes $w_y$ towards $z$ while maintaining normalization. Over training epochs, the weight vector for the New Class ($y=+1$) converges to the mean direction of its training samples:
\begin{equation}
    w_{new} \propto \frac{v_{s} + v_{d}}{\|v_{s} + v_{d}\|}.
\end{equation}
Similarly, for the Old Class ($y=-1$), $w_{old} \propto \frac{-v_{s} - v_{d}}{\|-v_{s} - v_{d}\|}$.

\textbf{3. Inference Failure.}
Consider the same Real Old Sample: $z_{test} = v_{d} - v_{s}$.
We compare the cosine similarity scores for New ($S_{new}$) and Old ($S_{old}$) classes.
\begin{equation}
\begin{aligned}
    S_{new} &= w_{new}^T z_{test} \propto (v_{s} + v_{d})^T (v_{d} - v_{s}) = \|v_{d}\|^2 - \|v_{s}\|^2. \\
    S_{old} &= w_{old}^T z_{test} \propto (-v_{s} - v_{d})^T (v_{d} - v_{s}) = -(\|v_{d}\|^2 - \|v_{s}\|^2).
\end{aligned}
\end{equation}
\textbf{Conclusion:}
Even with normalization, the sign of the score is determined by $\|v_{d}\|^2 - \|v_{s}\|^2$ (which is proportional to $\|\Delta_{dom}\|^2 - \|\Delta_{sem}\|^2$).
If $\|\Delta_{dom}\| \gg \|\Delta_{sem}\|$, then $S_{new} > 0$ and $S_{old} < 0$. The Cosine Classifier also assigns higher similarity to the New Class weight because the test sample shares the strong ``Real Domain'' component ($\Delta_{dom}$) with it.
\begin{figure}[t]
    \centering
    \includegraphics[width=\linewidth]{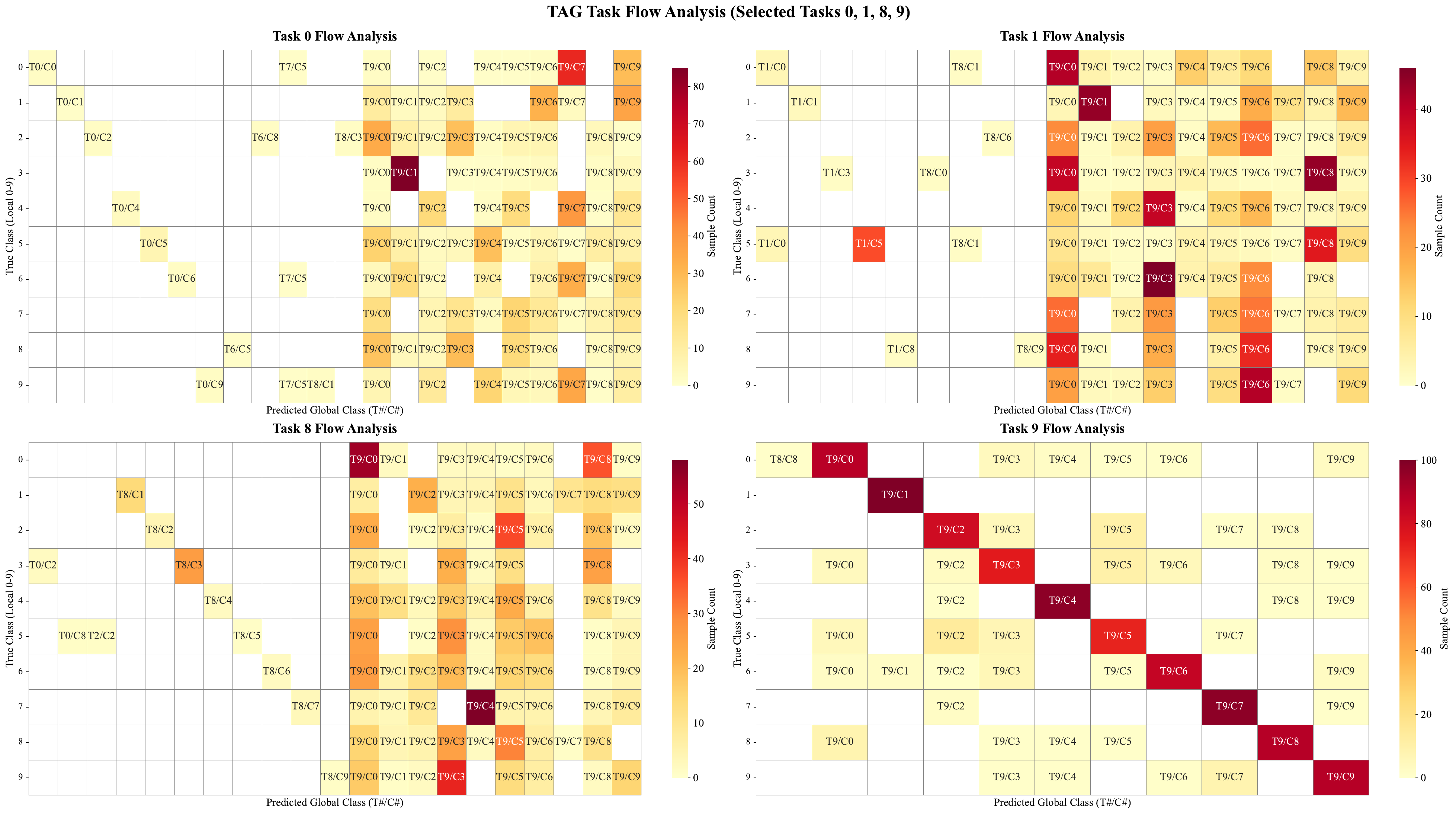} 
    \caption{\textbf{Empirical Verification of Domain Shortcut}. We visualize the Task-Agnostic Accuracy (TAG) classification flow for Task 0/1/8/9 samples (Real Data) on the CIFAR-100 dataset. The x-axis represents the predicted global class index. ``T$i$/C$j$'' means the $j$th Class of the $i$th Task. Consistent with our theoretical derivation, the heatmap reveals a massive concentration of predictions in the Task 9 region (T9/C\#, highlighted in red), which corresponds to the current Real task. This empirical evidence confirms that the model correctly identifies the domain attribute (``Real'') but incorrectly associates it with the only available Real classes (Task 9), verifying the existence of the ``Domain Shortcut'' where domain cues override semantic distinctions.}
    \label{appfig:tag_flow}
\end{figure}
\subsubsection{Conclusion and Empirical Verification}
\textbf{Theoretical Conclusion:}
Both derivations confirm that when the domain gap dominates ($\|v_{dom}\| \gg \|v_{sem}\|$), the classifier—whether Linear or Cosine—establishes a decision boundary based on domain artifacts rather than semantic content. The Linear classifier fails due to magnitude bias, while the Cosine classifier fails due to angular domain alignment.

\textbf{Empirical Verification:}
We provide empirical evidence to support this proof in Figure~\ref{appfig:tag_flow}. The heatmap visualizes the Task-Agnostic Accuracy (TAG) classification flow for Task 0/1/8/9 data (Real) after training Task 9. Consistent with our derivation, the heatmap shows a massive concentration of predictions in the Task 9 region (T9/C\#). The model correctly identifies the domain (Real) but incorrectly associates it with the only Real classes it currently knows (Task 9), verifying the existence of the Domain Shortcut. \qed

% -----------------------------------------------------------------------------

\subsection{Analysis and Verification of Corollary~\ref{asssds}: Shared Domain Subspace}
\label{app:assumption_3_2}

\textbf{Corollary~\ref{asssds} Restated.} \textit{We assume that the domain bias introduced by the training-free generator is not randomly distributed in the feature space but is concentrated in a subspace $\mathcal{S}_{dom}$, and this subspace is shared across different classes~(either new or old). That is:}
\begin{equation}
    \forall c \in \mathcal{C}_{1:T}, \quad (\mathbb{E}[z|c, real] - \mathbb{E}[z|c, syn]) \in \mathcal{S}_{dom}.
\end{equation}
\subsubsection{Theoretical Analysis: The Systematic Artifact Hypothesis}
While strict mathematical proof depends on the black-box nature of the generator, we provide a theoretical derivation based on the generator's mechanics.
Let $G(\cdot; \Theta)$ be the frozen generator parameterized by $\Theta$. For any class $c$, the feature of generated sample $z_{syn}$ can be modeled as:
\begin{equation}
    z_{syn}^c = G(\text{prompt}_c; \Theta) \approx z_{ideal}^c + \mathcal{A}(\Theta) + \epsilon,
\end{equation}
where $z_{ideal}^c$ denotes the perfect semantic representation, and $\mathcal{A}(\Theta)$ represents the systematic artifacts (\emph{e.g.}, checkerboard patterns, spectral bias) inherent to the frozen weights $\Theta$.
Crucially, in our Training-Free Generative Replay setting, the generator parameters $\Theta$ are \textbf{shared and frozen} across all incremental stages and all classes. Consequently, the artifact term $\mathcal{A}(\Theta)$ is structurally consistent and relatively invariant to the conditioning class $c$.
In contrast, real images $z_{real}^c$ are free from these specific generative artifacts: $z_{real}^c \approx z_{ideal}^{c} + \epsilon$.

Subtracting the expectations of real and synthetic features yields the domain bias vector for class $c$:
\begin{equation}
    \mathbb{E}[z|c, real] - \mathbb{E}[z|c, syn] \approx (z_{sem}(c)) - (z_{sem}(c) + \mathcal{A}(\Theta)) = -\mathcal{A}(\Theta).
\end{equation}
Since the term $-\mathcal{A}(\Theta)$ is governed by the frozen generator and is independent of the specific class label $c$, the domain bias vectors for all classes lie within the same low-dimensional manifold spanned by these systematic artifacts. Let this manifold be the subspace $\mathcal{S}_{dom}$.
Therefore, we conclude that the domain shift is shared across all tasks:
\begin{equation}
    \forall c \in \mathcal{C}_{1:T}, \quad (\mathbb{E}[z|c, real] - \mathbb{E}[z|c, syn]) \in \mathcal{S}_{dom}.
\end{equation}
This justifies using the subspace estimated from new classes to rectify old classes.

\subsubsection{Empirical Verification: Cross-Class Generalization}
To empirically validate that $\mathcal{S}_{dom}$ is indeed shared, we conduct a \textbf{Cross-Class Generalization Experiment} on CIFAR-100.
We estimate the domain projection matrix $P_{new}$ using \textit{only} the data from New Classes (Task $T$). We then apply this projector to the data of unseen Old Classes (Task $T-1$) to see if it reduces their domain gap. We measure the Maximum Mean Discrepancy (MMD) between Real and Synthetic distributions for Old Classes before and after projection.

\begin{figure}[t]
    \centering
    \includegraphics[width=\linewidth]{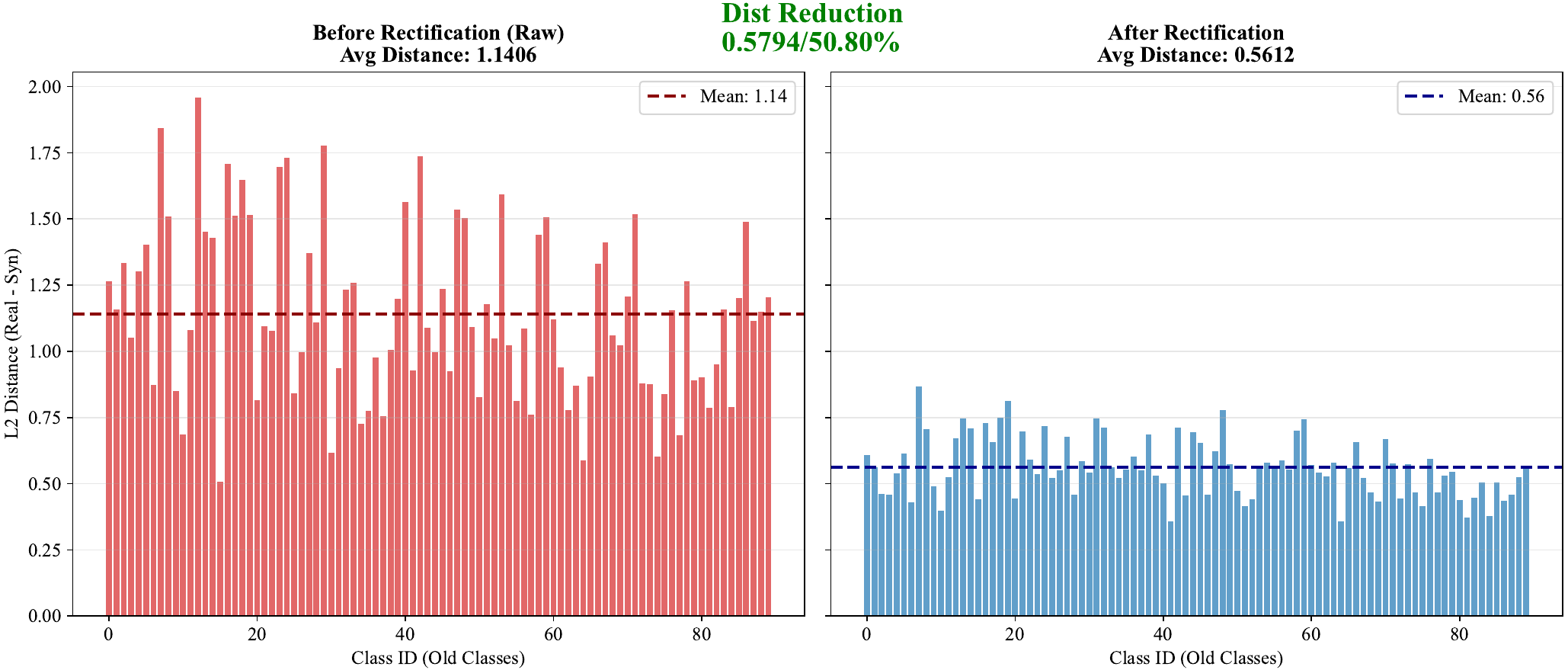} 
    \caption{\textbf{Empirical Verification of Shared Domain Subspace (Corollary~\ref{asssds}).} 
    We evaluate whether the domain subspace estimated from \textbf{New Classes} can generalize to \textbf{Old Classes}. 
    The histograms show the $L_2$ distance between real and synthetic feature centroids for each unseen old class.
    \textbf{Left:} Raw features exhibit a significant domain gap (Avg Dist: 1.14). 
    \textbf{Right:} After applying the rectification matrix $P$ learned \textit{exclusively} from new classes, the domain gap for old classes is reduced by \textbf{50.80\%} (Avg Dist: 0.56). 
    This successful cross-class generalization confirms that the domain bias lies in a shared subspace across disjoint categories.}
    \label{fig:assumption_verify}
\end{figure}
As illustrated in Figure \ref{fig:assumption_verify}, the projector learned from disjoint new classes successfully aligns the distributions of old classes, reducing the average domain distance from \textbf{1.14 to 0.56 (a 50.80\% reduction)}, which empirically proves that $\Delta_{dom}$ resides in a shared subspace $\mathcal{S}_{dom}$. 
This finding is further corroborated qualitatively by the t-SNE visualization in Figure~\ref{fig:tsne}, where the rectified features for Task 0 (Old Classes) are transformed using the projection matrix derived \textit{exclusively} from New Classes. 
The fact that these features transition from dispersed raw distributions to compact, well-clustered representations strongly validates that the domain artifacts identified in new tasks are consistent with those in old tasks, confirming the cross-class generalizability of our rectification strategy.

The above analysis and experimental results validate Corollary~\ref{asssds}.
\begin{figure}[ht]
    \centering
    \includegraphics[width=0.75\linewidth]{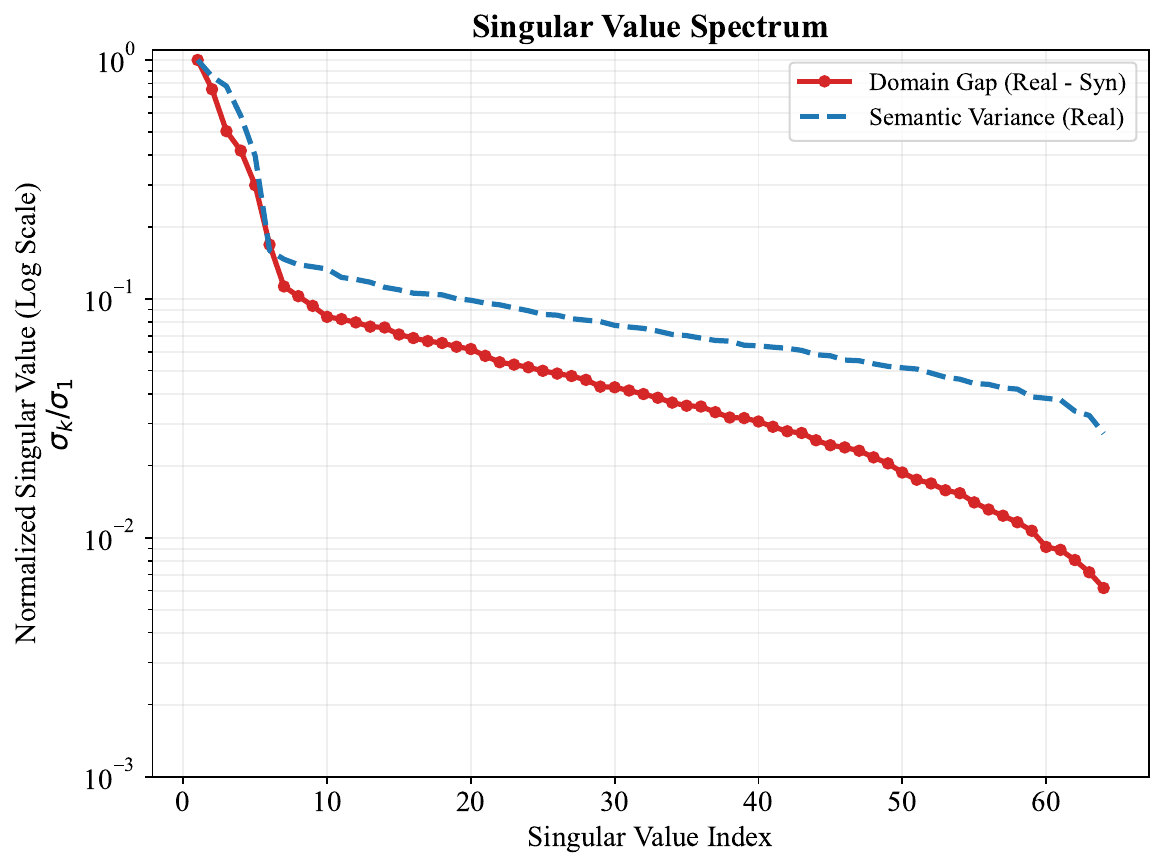} 
    \caption{\textbf{Empirical Verification of Corollary~\ref{asscov} (Singular Value Spectrum).} We analyze the rank structure of the Domain Gap (Red, Solid) and Intra-class Semantic Variance (Blue, Dashed) using SVD on a logarithmic scale. The \textbf{Domain Gap} curve exhibits a precipitous \textit{exponential decay}, confirming that the systematic domain artifacts are concentrated in a low-dimensional subspace (Low-Rank). In contrast, the \textbf{Semantic Variance} curve shows a slower \textit{power-law decay} with a heavy tail, indicating that semantic information is distributed robustly across dimensions. The clear spectral crossover at $k \approx 6$ (where Blue $>$ Red) validates our strategy of excising the top-$k$ dominant domain components while preserving the semantic-rich residual subspace.}
    \label{fig:svd_verify}
\end{figure}
% -----------------------------------------------------------------------------
\subsection{Analysis and Verification of Corollary~\ref{asscov}}
\label{app:assumption_3_3}

\textbf{Corollary~\ref{asscov} Restated.} \textit{Since synthetic data is produced by a training-free generator exhibiting specific stylistic tendencies and patterns, we suppose that the domain bias is not random noise but stems from inherent systematic generative artifacts. This bias manifests as a \textbf{structured component} $\Sigma_{dom}$ aligned with specific directions in the feature space. In contrast, we model intra-class semantic variations (\emph{e.g.}, color, pose, background) as \textbf{statistically isotropic components} $\sigma_{sem}^2 I$.
Mathematically, the covariance of class-conditional residuals satisfies:}
\begin{equation}
    \text{Cov}(z|c) \approx \Sigma_{dom} + \sigma_{sem}^2 I, \quad \text{s.t. } \lambda_1(\Sigma_{dom}) \gg \sigma_{sem}^2.
\end{equation}
\subsubsection{Mathematical Derivation via Residual Covariance}
We rigorously derive the approximation $\text{Cov}(z|c) \approx \Sigma_{dom} + \sigma_{sem}^2 I$ through the following steps.

\textbf{Step 1: Define the Residual Vector $r$}
In our method, we compute the residual of a real sample relative to the synthetic center. For a sample $i$ of class $c$, the residual is defined as:
\begin{equation}
    r_{i} = z_{i}^{real} - \mu_{c}^{syn}.
\end{equation}
Using the feature decomposition $z = z_{sem} + z_{dom} + \epsilon$, we expand $r_i$:
\begin{equation}
\begin{aligned}
    r_{i} &= (z_{sem, i} + z_{dom}^{real}) - (\bar{z}_{sem, c} + z_{dom}^{syn}) \\
    &= \underbrace{(z_{dom}^{real} - z_{dom}^{syn})}_{\text{Domain Shift } \Delta_{dom}} + \underbrace{(z_{sem, i} - \bar{z}_{sem, c})}_{\text{Semantic Fluctuation } s_i} + \epsilon_i.
\end{aligned}
\end{equation}
Here, $\Delta_{dom}$ represents the constant domain shift vector (since the generator is frozen and shared), and $s_i$ represents the zero-mean semantic fluctuation of the sample relative to its class mean.

\textbf{Step 2: Compute the Covariance Matrix}
The covariance matrix (specifically, the second moment matrix) is defined as the expectation $\mathbb{E}[rr^T]$.
\begin{equation}
\begin{aligned}
    \text{Cov}(r) &= \mathbb{E}[r_i r_i^T] \\
    &= \mathbb{E}[(\Delta_{dom} + s_i)(\Delta_{dom} + s_i)^T] \\
    &= \mathbb{E}[\Delta_{dom}\Delta_{dom}^T + s_i s_i^T + \Delta_{dom} s_i^T + s_i \Delta_{dom}^T].
\end{aligned}
\end{equation}

\textbf{Step 3: Apply Statistical Corollary}
We analyze each term in the expansion:
\begin{itemize}
    \item \textbf{Term 1: $\Delta_{dom}\Delta_{dom}^T$}. This is the outer product of a constant vector. Linear algebra dictates that its rank is exactly 1. This corresponds to the perfectly structured, low-rank component $\Sigma_{dom}$.
    \item \textbf{Term 2: $\mathbb{E}[s_i s_i^T]$}. This represents the covariance of intra-class semantic variations. Relative to the dominant domain shift, semantic variations (\emph{e.g.}, pose, background) are diverse and high-dimensional. For mathematical tractability, we model this as isotropic noise: $\mathbb{E}[s_i s_i^T] \approx \sigma_{sem}^2 I$.
    \item \textbf{Terms 3 \& 4: Cross-Terms $\mathbb{E}[\Delta_{dom} s_i^T]$}. Since $\Delta_{dom}$ is constant and $s_i$ is a zero-mean random variable, and assuming domain artifacts are uncorrelated with specific semantic contents (\emph{e.g.}, image quality is independent of object pose), the expectation of cross-terms is 0.
\end{itemize}

\textbf{Step 4: Final Formulation}
Combining these terms, we arrive at the final approximation:
\begin{equation}
    \text{Cov}(r) \approx \underbrace{\Delta_{dom}\Delta_{dom}^T}_{\Sigma_{dom} \text{ (Low-Rank)}} + \underbrace{\sigma_{sem}^2 I}_{\text{Isotropic Noise}}.
\end{equation}

\subsubsection{Empirical Verification: Singular Value Spectrum}
To empirically validate Corollary~\ref{asscov} regarding the rank structure of domain and semantic components, we conduct a singular value spectrum analysis. We construct the domain difference matrix according to Hierarchical Subspace Rectification described in the main manuscript. Besides, we construct the semantic matrix $M_{sem}$ by concatenating class-centered real features from all categories. SVD is applied to estimate the rank structure of intra-class semantic variations. To highlight the rate of energy decay, we plot the normalized singular values ($\sigma_k / \sigma_1$) on a logarithmic scale. We analyze the singular value spectrum on CIFAR-100~($T=10$), as shown in Figure~\ref{fig:svd_verify}.

First, observing the \textbf{Domain Gap (Red Curve)}, we identify a sharp \textit{exponential decay} in the logarithmic scale. The energy is heavily concentrated in the first few principal components ($k \le 5$), after which the singular values drop precipitously below $10^{-2}$. This empirically confirms that the systematic artifacts introduced by the frozen generator are confined to a very low-dimensional subspace $\mathcal{S}_{dom}$.

Second, in contrast, the \textbf{Semantic Variance (Blue Curve)} exhibits a much slower \textit{power-law decay}. While not perfectly isotropic (flat) due to the manifold nature of deep features, it maintains a ``heavy tail'', indicating that discriminative semantic information is distributed across a wide range of feature dimensions.

\textbf{The Crucial Insight lies in the Spectral Disparity:} 
In the dominant components (Indices 1-5), the domain bias overwhelms the semantic signal, which explains the prevalence of the Domain Shortcut. However, a crossover occurs rapidly around $k \approx 5$. Beyond this point, the semantic variance consistently dominates the residual domain noise (\emph{i.e.}, \textcolor{blue}{Semantic} $\gg$ \textcolor{red}{Domain}). This spectral separation provides robust justification for our Feature Domain Rectification (FDR) strategy: by excising only the top-$k$ domain directions, we effectively eliminate the major source of domain shift while preserving the semantic-rich residual subspace intact.

% -----------------------------------------------------------------------------

\subsection{Proof of Soft Projection Necessity via SNR Analysis}
\label{app:snr_proof}

We quantify the effectiveness of FDR using the Signal-to-Noise Ratio (SNR). Let ``Signal'' be semantic variance and ``Noise'' be domain variance.
Before rectification, due to the Domain Shortcut, $\text{Tr}(\Sigma_{dom}) \gg \text{Tr}(\Sigma_{sem})$, implying $\text{SNR}_{raw} \to 0$.

The FDR projection matrix is $P = I - \gamma U_{dom} U_{dom}^T$. After projection, the domain variance is suppressed by $(1-\gamma)^2$, while semantic variance is preserved (assuming near-orthogonality).
\begin{equation}
    \text{SNR}_{rect} \approx \frac{\text{Tr}(P \Sigma_{sem} P^T)}{\text{Tr}(P \Sigma_{dom} P^T)} \approx \frac{\text{Tr}(\Sigma_{sem})}{(1-\gamma)^2 \text{Tr}(\Sigma_{dom})}.
\end{equation}
\begin{itemize}
    \item If $\gamma=1.0$ (Hard Projection), the denominator becomes 0, theoretically maximizing SNR. However, due to feature entanglement, this also cuts into the numerator (Signal), causing performance drops.
    \item If $\gamma \approx 0.8$ (Soft Projection), the domain noise is attenuated by a factor $(1-0.8)^2 = 0.04$. The noise is reduced by 96\%, making it negligible, while the signal remains intact. This maximizes the \textit{effective} SNR for classification.
\end{itemize}

% =============================================================================
\section{Methodological Discussions}
\label{app:method_discussion}

In this section, we provide a rigorous justification for the design choices in the Feature Domain Rectification (FDR) framework, specifically focusing on the residual definition (Eq.~\ref{residual}) and the dual-stage logic of projection matrix calculation versus application.

\subsection{Justification for Residual Definition (Eq.~\ref{residual})}
\label{app:residual_justification}

We define the residual for a real sample $z_i^{real}$ of class $c$ as $r_i = z_i^{real} - \mu_c^{syn}$. To demonstrate the necessity of this specific formulation, we analyze all plausible alternatives for calculating the difference between the Real and Synthetic domains:

\begin{enumerate}
    \item \textbf{Case 1: Sample-to-Sample ($z_i^{real} - z_j^{syn}$)} \\
    \textit{Drawback: High Variance from Generative Noise.} \\
    Synthetic samples often exhibit stochastic instability (\emph{e.g.}, mode collapse or high-frequency artifacts). Computing residuals against random synthetic instances introduces significant instance-level noise $\epsilon_{syn}$. This increases the variance of the residual estimates, making it difficult to isolate the consistent domain bias direction.

    \item \textbf{Case 2: Center-to-Center ($\mu_c^{real} - \mu_c^{syn}$)} \\
    \textit{Drawback: Loss of Instance Granularity.} \\
    While the mean difference vector captures the global domain shift, it assumes a uniform bias magnitude for all samples. However, real samples are distributed diversely in the feature space. A simple center-to-center subtraction fails to capture how specific instances deviate from the domain boundary, treating the diverse semantic manifold as a single point.

    \item \textbf{Case 3: Synthetic-Sample-to-Real-Center ($z_j^{syn} - \mu_c^{real}$)} \\
    \textit{Drawback: Misaligned Reference Frame.} \\
    In the context of generative replay, our goal is to rectify \textit{real} samples to match the \textit{synthetic} replay buffer because old classes only have synthetic data, not vice versa. Using $\mu_c^{real}$ as the anchor would model the inverse domain shift (Synthetic $\to$ Real), which is irrelevant to the inference-time task where only real samples are available.

    \item \textbf{Ours: Real-Sample-to-Synthetic-Center ($z_i^{real} - \mu_c^{syn}$)} \\
    \textit{Advantage: Stability and Specificity.} \\
    We use the synthetic class centroid $\mu_c^{syn}$ as a \textbf{stable anchor} (averaging out generative noise) and the real instance $z_i^{real}$ as the \textbf{source}. This formulation captures the specific domain deviation of each real sample relative to the ideal target distribution.
\end{enumerate}

\subsection{Decoupling Subspace Estimation from Instance Rectification}
\label{app:projection_decoupling}

A critical methodological question is: \textit{Why do we calculate the projection matrix $P$ on uncentered features, but apply it to centered residuals?} 
This apparent contradiction stems from the distinct roles of the two phases: \textbf{Estimation} vs. \textbf{Rectification}.

\subsubsection{Phase 1: The Estimation Phase}
\textbf{Objective:} To identify the dominant enemy—the Domain Shift.
The domain gap mathematically consists of two components:
\begin{itemize}
    \item \textbf{First-Order Bias (Mean Shift):} The distance between distribution centers ($\mu_{real} - \mu_{syn}$).
    \item \textbf{Second-Order Bias (Covariance Shift):} The difference in distribution shapes.
\end{itemize}

\textbf{Mathematical Derivation:}
Let the uncentered difference vector be $\text{Diff} = z_{real} - \mu_{syn}$. We can decompose it as:
\begin{equation}
    \text{Diff} = \underbrace{(z_{real} - \mu_{real})}_{\text{A: Semantic Variance}} + \underbrace{(\mu_{real} - \mu_{syn})}_{\text{B: Domain Mean Shift}}.
\end{equation}
\begin{itemize}
    \item \textbf{Term A (Variance):} Represents diverse semantic information (\emph{e.g.}, object pose), which directions are scattered.
    \item \textbf{Term B (Bias):} Represents the consistent domain shift, which has a constant direction and high magnitude.
\end{itemize}

\textbf{Why Uncentered SVD?}
When we perform SVD on the uncentered $\text{Diff}$ matrix:
Since Term B is consistent across all samples and possesses high energy, the top singular vector $v_1$ (where $k=1$) will align precisely with the direction of the Mean Shift ($\mu_{real} - \mu_{syn}$).

\textbf{Counterfactual:} If we were to center the data first ($z'_{real} = z_{real} - \mu_{real}$), Term B would vanish ($\mathbf{0}$). SVD would then only capture Term A (semantic shape differences). Cutting these directions would damage semantic discriminability (\emph{e.g.}, making the model unable to distinguish a standing dog from a sitting one).

\textit{Conclusion: We must NOT center during estimation to preserve the ``Mean Shift'' signal for SVD detection.}

\subsubsection{Phase 2: The Rectification Phase}
\textbf{Objective:} To eliminate the identified bias (aligning the blocks).
The rectification process $\tilde{z} = z - P P^T (z - \mu_{syn})$ operates in two steps:

\textbf{Step 1: Centering (Translation Alignment)}
\begin{equation}
    z' = z - \mu_{syn}
\end{equation}
This step addresses the \textbf{First-Order Bias}. It translates the real data manifold so that its reference point aligns with the synthetic origin. Without this step, the two distributions remain far apart in Euclidean space, making subsequent rotation/projection ineffective.

\textbf{Step 2: Projection (Geometric Rectification)}
\begin{equation}
    \tilde{z} \leftarrow Pz'
\end{equation}
This step addresses the \textbf{structural consistency}. Since $P$ captures the direction of the domain artifact (identified in Phase 1), this projection excises the component of the vector that aligns with the domain subspace. 
\textit{Crucially, projection matrices operate on vector directions relative to the origin.} By applying $P$ to the centered residual, we ensure that we are removing the ``shape/direction'' anomaly relative to the target anchor, rather than distorting the absolute position of the features.

\textbf{Summary:}
We use uncentered features for estimation to \textbf{detect} the global mean shift, and we use centered residuals for application to \textbf{correct} the relative geometric deviation.

% =============================================================================

\begin{algorithm}[t]
\caption{Attribute-Aware Generative Replay with Task-Constraint Filtering}
\label{alg:data_generation}
\begin{algorithmic}[1]
\REQUIRE Current task training data $\mathcal{D}_t$, Old model $M_{t-1}$ (if $t>0$), Vision-Language Model $\mathcal{V}$, Text-to-Image Generator $\mathcal{G}$, Prompt Bank $\mathcal{B}$.
\ENSURE Synthetic Replay Dataset $\mathcal{D}_{syn}$.

\STATE \textbf{Stage 1: Attribute-Aware Prompt Extraction}
\FOR{each real image $(x, y)$ in $\mathcal{D}_t$}
    \STATE Define question set $Q = \{category, color, pose, background, action\}$
    \STATE Extract dense caption $p \leftarrow \mathcal{V}(x, Q)$ \COMMENT{Concatenate answers}
    \STATE Update Prompt Bank: $\mathcal{B} \leftarrow \mathcal{B} \cup \{(p, y)\}$
\ENDFOR

\STATE \textbf{Stage 2: Synthesis and Filtering}
\STATE Initialize $\mathcal{D}_{syn} \leftarrow \emptyset$
\FOR{task $k = 0$ to $t$}
    \STATE Retrieve prompts $\mathcal{P}_k$ for task $k$ from $\mathcal{B}$
    \FOR{each $(p, y) \in \mathcal{P}_k$}
        \STATE Generate candidate image: $\hat{x} \leftarrow \mathcal{G}(p)$
        
        \IF{$k < t$ \textbf{and} $M_{t-1}$ exists}
            \STATE \textbf{// Case A: Old Task Replay $\rightarrow$ Apply Filtering}
            \STATE Get class set $\mathcal{C}_k$ for task $k$
            \STATE $\hat{y} = \operatorname{argmax}_{c \in \mathcal{C}_k} P(c | \hat{x}; \theta_{t-1})$
            \IF{$\hat{y} == y$}
                \STATE $\mathcal{D}_{syn} \leftarrow \mathcal{D}_{syn} \cup \{(\hat{x}, y)\}$
            \ENDIF
        \ELSE
            \STATE \textbf{// Case B: New Task Augmentation $\rightarrow$ No Filtering}
            \STATE \COMMENT{Old model cannot filter classes unknown to it}
            \STATE $\mathcal{D}_{syn} \leftarrow \mathcal{D}_{syn} \cup \{(\hat{x}, y)\}$
        \ENDIF
    \ENDFOR
\ENDFOR
\STATE \textbf{Return} $\mathcal{D}_{syn}$
\end{algorithmic}
\end{algorithm}

\section{Algorithms and Implementation Details}
\label{app:algorithms}
\subsection{Details of Attribute-Aware Generative Replay}
\label{app:data_gen}
In this section, we provide a comprehensive description of our data generation pipeline, corresponding to Algorithm~\ref{alg:data_generation}.
\paragraph{Attribute-Aware Prompt Extraction.}
Instead of relying on generic class names (\emph{e.g.}, ``a photo of a dog''), which often lead to ambiguous or stereotypical generations, we leverage a Vision-Language Model (VLM) to extract instance-specific semantic details. Specifically, we employ Qwen2.5-VL to interrogate each real image in the current training stream. For every image, we construct a structured query focusing on five aspect-oriented attributes:
\begin{itemize}
    \item ``What is the fine-grained category of this \texttt{\{label\}}?''
    \item ``What is the color of this \texttt{\{label\}}?''
    \item ``What is the pose of this \texttt{\{label\}}?''
    \item ``What is the background of this image?''
    \item ``What is the \texttt{\{label\}} doing in this image?''
\end{itemize}
The answers to these questions are concatenated to form a dense, high-fidelity semantic caption. These captions are stored in a Prompt Bank, which incurs negligible storage overhead compared to saving raw pixel data.
\paragraph{Image Synthesis with Training-Free Generator.}
At the beginning of a new task $t$, we retrieve the stored captions for all previous classes ($1 \dots t$) from the Prompt Bank. These rich text descriptions are fed into a frozen pre-trained Text-to-Image (T2I) generator (we utilize Qwen-Image in our experiments) to synthesize the raw replay dataset. This approach ensures that the generated images cover diverse attributes (\emph{e.g.}, various poses and backgrounds) historically recorded from the real data distribution.
\paragraph{Task-Constraint Data Filtering.}
To prevent the generated images from polluting the decision boundary, we implement a filtering strategy using the frozen old model $M_{t-1}$. A naive global classification filtering is prone to inter-task confusion. Instead, we propose a Task-Constraint filtering mechanism. For a synthetic sample $x_{syn}$ conditioned on label $y$ belonging to an old task $k$, we utilize the task identity as a prior. We retain the sample only if the old model correctly classifies it within the output space of its specific task $\mathcal{C}_k$:
\begin{equation}
    \operatorname{argmax}_{c \in \mathcal{C}_k} P(c|x_{syn}; \theta_{t-1}) = y
\end{equation}
This constraint ensures that the synthetic old-class data maintains high semantic consistency with the intra-task decision boundaries established by the old model, effectively filtering out ambiguous samples while avoiding interference from classes in other tasks and storing them as $\mathcal{D}_{old}^{syn}$. For synthetic new-class data, we directly retain them as $\mathcal{D}_{new}^{syn}$

\begin{algorithm}[t]
\caption{DREAM Training Loop}
\label{alg:training}
\begin{algorithmic}[1]
\REQUIRE Current model $\mathcal{M}_t$, Frozen old model $\mathcal{M}_{t-1}$ (with old stats $P_{t-1}, \mu_{t-1}$), Current Data $\mathcal{D}_{t}^{real}$, Synthetic Old $\mathcal{D}_{old}^{syn}$, Synthetic New $\mathcal{D}_{new}^{syn}$, Cache $\mathcal{H}$.
\STATE \textbf{Hyperparams:} $\gamma=0.8, \lambda_{KD}=1.0, \lambda_{Proto}=0.1, k=5$
\STATE Initialize Model $\mathcal{M}_t \leftarrow \mathcal{M}_{t-1}$, Feature Cache $\mathcal{H} \leftarrow \emptyset$, Prototypes $\mathcal{C} \leftarrow \emptyset$

\FOR{epoch $= 1$ to $E$}
    \STATE \textbf{// Step 1: Update FDR Stats \& Prototypes (Efficiently)}
    \IF{epoch == 1}
        \STATE Extract features $Z, Y, D$ from subset of $\mathcal{D}_{all}$ \COMMENT{Initial calculation}
    \ELSE
        \STATE Retrieve $Z, Y, D$ from $\mathcal{H}$ \COMMENT{Use cached features}
    \ENDIF
    
    \STATE Compute residuals $R = Z - \mu_{syn}[Y]$ based on domain $D$
    \STATE Perform Hierarchical SVD on $R$ to obtain $U_{dom}$ \COMMENT{top5 vector}
    \STATE Construct Projection Matrix $P = I - \gamma U_{dom} U_{dom}^T$
    \STATE Update Domain Means $\mu_{real}, \mu_{syn}$
    
    \STATE \textbf{Update Prototypes:} 
    \STATE Calculate rectified features $\tilde{Z} = P(Z - (\mathbb{I}_{D=real}\mu_{real} + \mathbb{I}_{D=syn}\mu_{syn}))$
    \STATE Update Prototypes $\mathcal{C}_c \leftarrow \text{Mean}(\tilde{Z}[Y=c])$ for each class $c$
    
    \STATE Reset Cache $\mathcal{H} \leftarrow \emptyset$

    \STATE \textbf{// Step 2: Training \& Caching}
    \FOR{batch $(x, y, d)$ in Dataloader}
        \STATE \textbf{Current Forward:} $z = \mathcal{M}_t(x)$
        \STATE \textbf{Cache Features:} $\mathcal{H}.\text{append}(z.\text{detach}(), y, d)$ 
        
        \STATE \textbf{Teacher Forward:}
        \STATE $z_{old\_rect} = P_{t-1}(\mathcal{M}_{t-1}(x) - \mu_{t-1})$ \COMMENT{Rectify using old stats}

        \STATE \textbf{Rectification (Current):} 
        \STATE $z' = z - (\mathbb{I}_{d=real}\mu_{real} + \mathbb{I}_{d=syn}\mu_{syn})$
        \STATE $\tilde{z} = P z'$
        
        \STATE Compute Logits $L = \text{Classifier}(\tilde{z})$
        \STATE $\mathcal{L}_{CE} = \text{CrossEntropy}(L, y)$
        \STATE $\mathcal{L}_{Proto} = \text{RAPC}(\tilde{z}, \mathcal{C})$ \COMMENT{Using updated Prototypes}
        \STATE $\mathcal{L}_{KD} = \text{Distillation}(\tilde{z}, z_{old\_rect})$ \COMMENT{Using old rectified features}
        
        \STATE Update weights via $\nabla (\mathcal{L}_{CE} + \lambda_{Proto}\mathcal{L}_{Proto} + \lambda_{KD}\mathcal{L}_{KD})$
    \ENDFOR
\ENDFOR
\end{algorithmic}
\end{algorithm}
\subsection{DREAM Training Procedure}
\label{app:stats_update}
In this section, we detail the complete training loop in Algorithm \ref{alg:training} and how the Rectification Matrix $P$, Domain Means $\mu$, and Class Prototypes $\mathcal{C}$ are updated, focusing on their frequency and the caching mechanism.
\subsubsection{Epoch-wise Statistical Update Strategy}
To balance the adaptability of the feature space with computational efficiency, we decouple the update frequency of the model parameters from the rectification statistics.
\begin{itemize}
    \item \textbf{Model Weights $\theta$:} Updated at every iteration step via stochastic gradient descent (Step-wise).
    \item \textbf{Rectification Statistics ($P, \mu$):} Updated once per epoch (Epoch-wise).
    \item \textbf{Class Prototypes $\mathcal{C}$:} Updated once per epoch, immediately following the rectification update.
\end{itemize}
This epoch-wise strategy assumes that the feature distribution shifts slowly within a single epoch, allowing us to treat the geometric manifold as locally stationary.

\subsubsection{Efficient Caching Mechanism}
Calculating the SVD for the projection matrix $P$ requires a global view of the feature distribution. A naive implementation would require an additional forward pass over the dataset at the beginning of each epoch, effectively doubling the computational cost. 
To address this, we introduce a \textbf{Ping-Pong Caching Mechanism} as described in Algorithm~\ref{alg:training}:

\begin{enumerate}
    \item \textbf{Initialization (Epoch 1):} Since no cache exists, we perform an explicit inference pass on a subset of data to calculate the initial $P$ and $\mu$.
    \item \textbf{Online Caching (Epoch $t \to t+1$):} During the training iterations of Epoch $t$, while performing the standard forward pass for loss computation, we detach and store the extracted features $z$ into a temporary cache $\mathcal{H}$.
    \item \textbf{Zero-Cost Retrieval:} At the start of Epoch $t+1$, we directly retrieve the features from $\mathcal{H}$ to compute the new $P$ and $\mu$. This eliminates the need for a separate extraction pass for all subsequent epochs.
\end{enumerate}

\subsubsection{Update Protocols}

\paragraph{1. Domain Statistics and Projection Matrix.}
At the start of each epoch, utilizing the cached features $Z \in \mathbb{R}^{N \times D}$, labels $Y$, and domain tags $D$, we first compute the domain-specific means $\mu_{real}$ and $\mu_{syn}$. The domain gap residuals are computed as:
\begin{equation}
    R_{i} = z_{i} - \mu_{syn}[y_i], \quad \text{where } d_i = \text{real}
\end{equation}
We then apply SVD on the stacked residual matrix $R$ to obtain the principal directions of the domain gap $U_{dom}$. The projection matrix is updated as $P = I - \gamma U_{dom} U_{dom}^T$.

\paragraph{2. Prototype Refreshment.}
Crucially, the class prototypes must remain aligned with the current rectified feature space. Immediately after updating $P$ and $\mu$, we re-rectify the cached features $Z$ using the \textit{new} statistics:
\begin{equation}
    z'_{i} = P_{new} (z_{i} - \mu_{new}[d_i])
\end{equation}
The prototype for class $c$ is then refreshed by averaging these newly rectified features:
\begin{equation}
    \mathcal{C}_c = \frac{1}{| \{i : y_i = c\} |} \sum_{i : y_i = c} z'_{i}
\end{equation}
This ensures that the Prototype Loss pushes samples towards the current, geometrically corrected class centers.

\paragraph{3. Frozen Teacher Rectification.}
For the knowledge distillation term $\mathcal{L}_{KD}$, we employ the old model $\mathcal{M}_{t-1}$ as the teacher. It is important to note that the teacher's output must also be rectified to provide consistent supervision. We utilize the \textbf{frozen statistics} ($P_{t-1}, \mu_{t-1}$) saved from the end of the previous task to rectify the teacher's features. This guarantees that the student learns to match the teacher's ``clean'' feature representation rather than its raw, biased output.

\begin{table}[t]
    \centering
    \caption{Detailed Hyperparameters and Implementation Settings.}
    \label{tab:hyperparams}
    \begin{tabular}{l|c|c}
        \toprule
        \textbf{Configuration} & \textbf{CIFAR10/CIFAR100/TinyImageNet} & \textbf{ImageNet-Subset}\\
        \midrule
        \multicolumn{3}{c}{\textit{Optimization}} \\
        \midrule
        Backbone & ResNet-32&ResNet-18\\
        Optimizer & SGD (Momentum=0.9) & SGD (Momentum=0.9)\\
        Learning Rate & 0.1 (Decay $5\times10^{-4}$ at milestones) & 0.1 (Decay $5\times10^{-4}$ at milestones)\\
        Batch Size & 128 & 128\\
        Epochs & 200/100&40/40\\
        \midrule
        \multicolumn{3}{c}{\textit{Generative Replay}}\\
        \midrule
        Generator & Qwen-Image&Qwen-Image\\
        VLM Attribute Extractor & Qwen2.5-VL&Qwen2.5-VL\\
        Images per Class ($N_{syn}$) & 5000/500/500&1300\\
        \midrule
        \multicolumn{3}{c}{\textit{DREAM Specifics}} \\
        \midrule
        Rectification Strength $\gamma$ & 0.8&0.8 \\
        Subspace Dimension $k$ & 5 &5 \\
        $\lambda_{Proto}$ & 0.1 & 0.1\\
        $\lambda_{KD}$ & 1.0 & 1.0\\
        \bottomrule
    \end{tabular}
\end{table}
\subsection{Hyperparameters}
In this section, we provide a comprehensive overview of the hyperparameters and implementation details used in our experiments, listed in Table~\ref{tab:hyperparams}. Our framework is implemented using PyTorch~\cite{paszke2019pytorch}, and all experiments are conducted on standard benchmarks including CIFAR-10, CIFAR-100, TinyImageNet, and ImageNet-Subset.
\paragraph{Architecture and Optimization.}
For the backbone network, we employ a \textbf{ResNet-32} for the CIFAR-10, CIFAR-100, and TinyImageNet datasets, while a \textbf{ResNet-18} is utilized for the ImageNet-Subset. Across all datasets, the models are optimized using Stochastic Gradient Descent (SGD) with a momentum of 0.9. We set the initial learning rate to 0.1, which decays at specific milestones during training. The weight decay is set to $5 \times 10^{-4}$. We use a consistent batch size of 128 for all experiments. Regarding the training duration, models are trained for 200 epochs for initial task 0 and 100 epochs for incremental task 1-9 on CIFAR-10/100 and TinyImageNet, whereas for the larger ImageNet-Subset, the training is restricted to 40 epochs per task to maintain efficiency.
\paragraph{Generative Replay Configuration.}
Our generative replay mechanism relies on \textbf{Qwen2.5-VL} as the attribute-aware prompt extractor and \textbf{Qwen-Image} as the training-free generator. To balance memory costs and performance, the number of synthetic samples generated per class ($N_{syn}$) varies by dataset complexity: we generate 5,000 images per class for CIFAR-10, 500 images per class for both CIFAR-100 and TinyImageNet, and 1,300 images per class for ImageNet-Subset.
\paragraph{DREAM Hyperparameters.}
Our proposed method introduces four key hyperparameters, which are kept consistent across all datasets to demonstrate robustness. Specifically, the rectification strength is set to $\gamma = 0.8$, and the subspace dimension for the SVD projection is set to $k = 5$. For the loss functions, the weight for the prototype loss is set to $\lambda_{Proto} = 0.1$, and the knowledge distillation weight is set to $\lambda_{KD} = 1.0$.

% =============================================================================
\begin{figure}[t]
    \centering
    \includegraphics[width=0.67\linewidth]{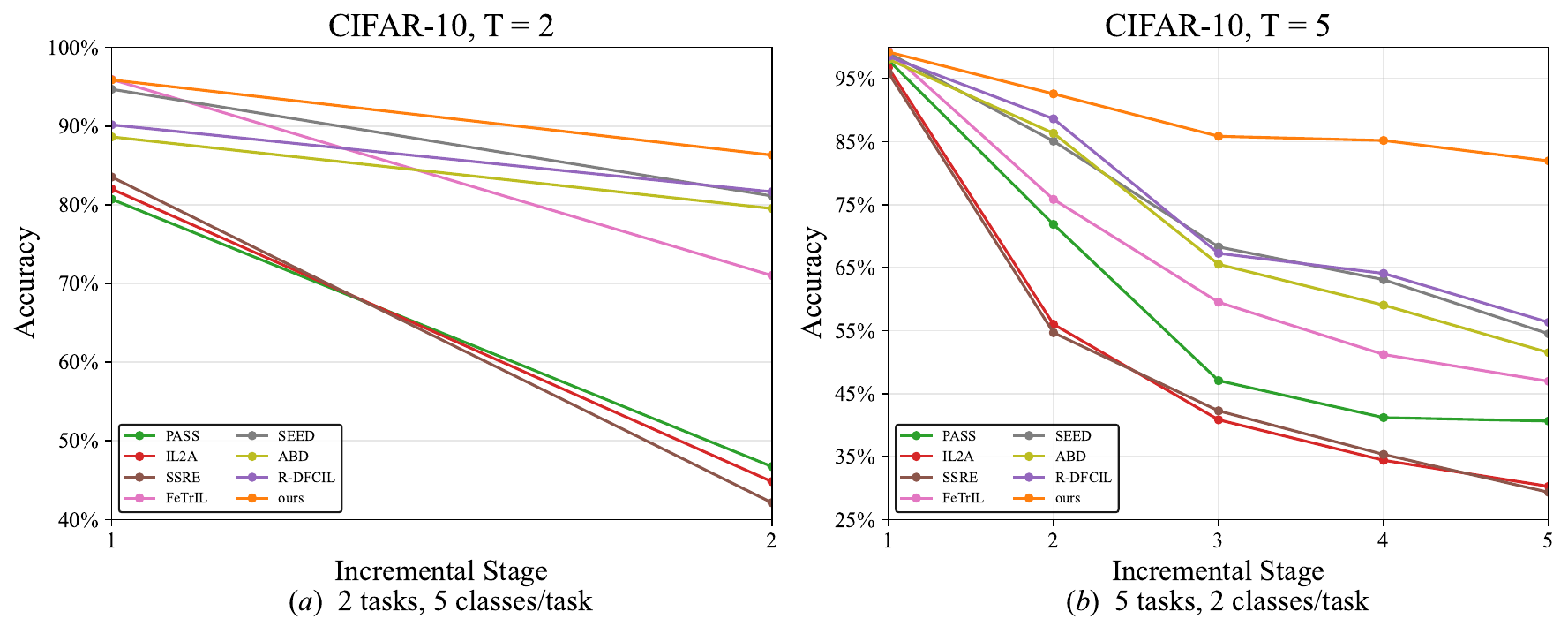} 
    \caption{Task-agnostic average accuracy curves on CIFAR-10. Results for (a) $T=2$ tasks and (b) $T=5$ tasks show that DREAM~(orange curve) consistently outperforms state-of-the-art exemplar-free methods.}
    \label{fig:supp_cifar10}
\end{figure}

\begin{figure}[t]
    \centering
    \includegraphics[width=\linewidth]{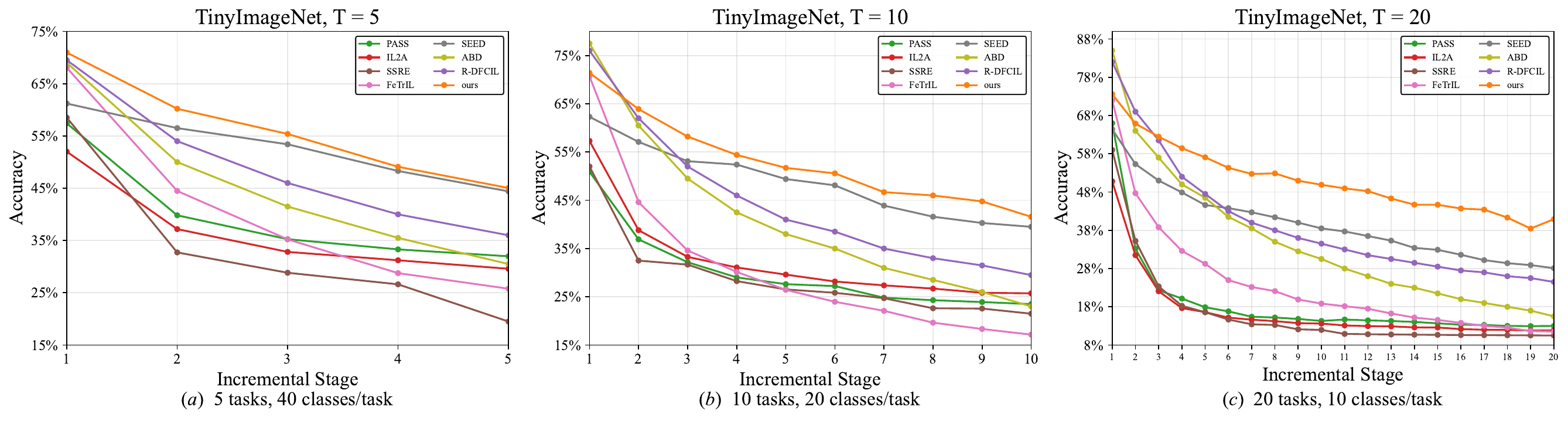} 
    \caption{Task-agnostic average accuracy curves on TinyImageNet. Results for (a) $T=5$ tasks, (b) $T=10$ tasks, and (c) $T=20$ tasks show that DREAM~(orange curve) consistently outperforms state-of-the-art exemplar-free methods.}
    \label{fig:supp_tinyimage}
\end{figure}
\section{Additional Experiments and Analysis}
\label{app:additional_experiments}

In this section, we provide supplementary experimental results to further validate the robustness, effectiveness, and efficiency of our proposed method.

\subsection{Performance on CIFAR-10 and TinyImageNet}
\label{app:exp_additional_datasets}

In the main manuscript, we primarily visualized the incremental performance on CIFAR-100 and ImageNet-Subset. To demonstrate the generalization capability of our method across datasets with varying scales and granularities, we provide the Task-Agnostic Accuracy (TAG) curves for \textbf{CIFAR-10} and \textbf{TinyImageNet} in Figure~\ref{fig:supp_cifar10} and Figure~\ref{fig:supp_tinyimage}.

Consistent with the observations in the main manuscript, our method consistently outperforms the state-of-the-art baselines throughout the incremental learning steps. Notably, on TinyImageNet, which presents a higher challenge due to visual complexity, our method maintains a significant lead, verifying that our DREAM is effective regardless of the dataset scale.

\subsection{Raw Generative Replay without FDR and RAPC}
To rigorously evaluate the contribution of our proposed Feature Distribution Rectification (FDR) and Real-Anchored Prototype Consolidation (RAPC) mechanism, we construct an ``apples-to-apples'' baseline denoted as \textbf{Raw Generative Replay}. In this setting, we employ the exact same generative model (Qwen-Image) and prompt strategies as our full method to synthesize replay data. However, we strictly remove the FDR and RAPC module, meaning the classifier is trained directly on the raw synthetic features without any distribution alignment or covariance rectification.

Table~\ref{tab:ablation_fdr} presents the comparison results on the four benchmarks. We observe that the Raw Generative Replay baseline consistently underperforms compared to our full DREAM framework. This performance drop indicates that relying solely on raw generative replay data makes the model vulnerable to \textbf{domain shortcuts}. Instead of learning class-discriminative features, the model tends to overfit to domain-specific attributes to distinguish between real and synthetic data. While the generated images appear visually coherent (as shown in Figure~\ref{fig:supp_gendata}), there is a critical distribution shift between the synthetic and real feature spaces. This misalignment in domain features leads the model to over-learn domain cues, causing it to incorrectly classify real old-class data as new, as visualized in \textbf{Figure~\ref{analysis} (Main Manuscript)} and \textbf{Figure~\ref{appfig:tag_flow} (Appendix)}. This confirms the presence of the domain shortcut. Our FDR module effectively breaks this domain shortcut, preventing the model from exploiting domain cues, leading to a significant improvement in accuracy.

\begin{table*}[t]
    \centering
    \caption{\textbf{Quantitative Analysis of the ``Apples-to-Apples'' Baseline.} We compare our full DREAM framework against \textit{Raw Generative Replay}, which utilizes the same synthesis data but excludes the FDR and RAPC module. Despite the visual quality of the synthetic data, the baseline suffers a catastrophic performance drop. This empirically confirms that raw synthetic features introduce severe domain shortcuts, and our DREAM mechanism is essential for aligning the synthetic feature distribution with real data.}
    \label{tab:ablation_fdr}
    \resizebox{\textwidth}{!}{
    \begin{tabular}{l|cc|ccc|ccc|ccc}
        \toprule
        \multirow{2}{*}{\textbf{Methods}} & \multicolumn{2}{c|}{CIFAR-10} & \multicolumn{3}{c|}{CIFAR-100} & \multicolumn{3}{c|}{ImageNet-Subset} & \multicolumn{3}{c}{TinyImageNet}\\
        \cline{2-12}
         & T=2 & T=5 & T=5 & T=10 & T=20 & T=5 & T=10 & T=20 & T=5 & T=10 & T=20\\
        \midrule
        Raw Generative Replay &76.79&55.48&44.85&33.27&24.38&58.86&47.41&39.73&36.69&24.68&18.12\\
        \textbf{DREAM (Ours)} & \textbf{91.08} & \textbf{88.98} & \textbf{71.55} & \textbf{69.73} & \textbf{68.48} & \textbf{78.37} & \textbf{76.57} & \textbf{74.43} & \textbf{55.54} & \textbf{52.92} & \textbf{51.00}\\
        $\Delta$&+14.29&+33.50&+36.70&+36.46&+44.10&+19.51&+29.16&+34.70&+18.85&+28.24&+32.88\\
        \bottomrule
    \end{tabular}
    }
\end{table*}
\begin{table*}[t]
    \centering
    \caption{\textbf{Comparison of Task-Aware Accuracy (TAW, \%).} We contrast the raw Generative Replay with our Rectified method, DREAM. The high TAW of the baseline confirms effective knowledge retention, while our method further refines discriminability.}
    \label{tab:app_taw}
    \begin{tabular}{l|cc|ccc|ccc|ccc}
        \toprule
        \multirow{2}{*}{\textbf{Methods}} & \multicolumn{2}{c|}{CIFAR-10} & \multicolumn{3}{c|}{CIFAR-100} & \multicolumn{3}{c|}{ImageNet-Subset} & \multicolumn{3}{c}{TinyImageNet}\\
        \cline{2-12}
         & T=2 & T=5 & T=5 & T=10 & T=20 & T=5 & T=10 & T=20 & T=5 & T=10 & T=20\\
        \midrule
        Raw Generative Replay &90.63&92.90&77.70&77.16&76.53&84.79&85.96&82.31&58.84&53.74&56.92\\
        \textbf{DREAM (Ours)} &\textbf{95.59}&\textbf{96.89}&\textbf{83.14}&\textbf{86.75}&\textbf{91.26}&\textbf{85.33}&\textbf{87.54}&\textbf{86.24}&\textbf{64.71}&\textbf{67.34}&\textbf{71.54}\\
        \bottomrule
    \end{tabular}
    \vspace{-4mm}
\end{table*}
\subsection{Task-Aware Accuracy (TAW) Evaluation}
\label{app:exp_taw}
While our main focus is on the more challenging Task-Agnostic Accuracy~(TAG) setting, the Task-Aware Accuracy (TAW) metric is crucial for evaluating the discriminability of the learned representation within each task. As mentioned in the introduction, a good generative replay method should maintain high intra-task classification accuracy.

To further understand the failure mode of naive generative replay and the specific contribution of our method, we compare the \textbf{TAW} of the Raw Generative Replay without rectification against our proposed DREAM method.

\textbf{1. Raw Generative Replay: Strong Anti-Forgetting}

As shown in Table \ref{tab:app_taw}, the raw generative replay achieves high TAW performance across all datasets (\emph{e.g.}, 86.75\% on CIFAR-100 $T=10$). 
Since TAW evaluates classification accuracy within the correct task ID (ignoring inter-task confusion), this high score implies that \textbf{Generative Replay inherently possesses strong anti-forgetting capabilities}. The synthetic data, despite its domain artifacts, successfully retains the semantic discriminability required to distinguish classes \textit{within} the same task. The catastrophic drop in Task-Agnostic accuracy (TAG) observed in the main manuscript is \textbf{not} due to forgetting semantic knowledge, but solely due to the \textbf{``Domain Shortcut''} problem—where the model fails to align disjoint tasks in a shared space. These results further prove the Proposition~\ref{thm:main}.

\textbf{2. DREAM: Enhancing Semantic Discriminability} \\
By applying our Feature Domain Rectification, our method further boosts the TAW performance (\emph{e.g.}, an improvement of \textbf{+4.66\%} on CIFAR-100 $T=10$). 
This result is significant for two reasons:

\textbf{Non-Destructive Rectification:} It proves that reducing the domain subspace does not damage the semantic content. If semantic information were coupled with the domain bias, TAW would have dropped.

\textbf{Noise Reduction:} The improvement suggests that domain artifacts act as ``noise'' even for intra-task classification. By filtering out $\Sigma_{dom}$, we produce cleaner feature representations, enabling the classifier to learn sharper decision boundaries.

These results further validate the Corollary~\ref{asssds} and Corollary~\ref{asscov}.

\subsection{Comparison with Standard Domain Alignment Methods}
\label{app:coral_mmd}

We further compare DREAM with two standard domain alignment objectives, CORAL and MMD, on CIFAR-100 ($T=10$). These methods are commonly used to reduce distribution discrepancy between source and target domains. However, they implicitly assume that the aligned domains share overlapping label spaces. This assumption does not hold in exemplar-free generative replay CIL, where the replay domain comprises synthetic samples from the old class, whereas the real domain comprises new-class samples. Directly aligning these two distributions may therefore induce a semantic mismatch.

\begin{table}[t]
    \centering
    \small
    \caption{Comparison with standard domain alignment methods. CORAL and MMD improve over raw replay but remain far below DREAM due to semantic mismatch under disjoint label spaces.}
    \label{tab:coral_mmd}
    \begin{tabular}{lcccc}
        \toprule
        Method & Raw Replay &CORAL & MMD & DREAM \\
        \midrule
        Acc. (\%) & 33.27 & 46.34 & 52.81 & \textbf{69.73} \\
        \bottomrule
    \end{tabular}
\end{table}

As shown in Table~\ref{tab:coral_mmd}, CORAL and MMD obtain 46.34\% and 52.81\%, respectively, which are substantially lower than DREAM. This confirms that the main challenge is not merely distribution alignment, but the synthetic-real domain shortcut under disjoint old/new label spaces. DREAM avoids direct global alignment and instead removes the domain-sensitive subspace while preserving semantic structures.

\subsection{Effect of Real Old-Class Data Ratio}
\label{app:real_old_ratio}

We next analyze when the domain shortcut emerges. Specifically, we gradually introduce real old-class data into the replay data to simulate the transition from strict exemplar-free generative replay to exemplar-based replay or joint training. The real old-class data ratio is varied from 0 to 1, where 0 denotes the strict EFCIL setting with only synthetic old-class replay.

\begin{table}[t]
    \centering
    \small
    \caption{Effect of real old-class data ratio on CIFAR-100 ($T=10$). Increasing the amount of real old-class data progressively weakens the synthetic-real asymmetry and improves task-agnostic accuracy.}
    \label{tab:real_old_ratio}
    \begin{tabular}{lcccccc}
        \toprule
        Real old-data ratio & 0 & 0.1 & 0.2 & 0.5 & 0.8 & 1.0 \\
        \midrule
        Acc. (\%) & 69.73 & 73.21 & 74.63 & 77.78 & \textbf{79.97} & 79.37 \\
        \bottomrule
    \end{tabular}
\end{table}

Table~\ref{tab:real_old_ratio} shows that increasing the ratio of real old-class data consistently improves performance from 69.73\% to 79.97\%. This indicates that the domain shortcut is not caused by synthetic data itself, but by the asymmetric structure of strict generative replay, where old classes are synthetic while new classes are real. Once real old-class samples are introduced, this asymmetry is weakened and the shortcut progressively disappears.

\subsection{Additional Hyperparameter Sensitivity}
\label{app:additional_hyperparams}

In the main manuscript, we report hyperparameter sensitivity on CIFAR-100. To further validate the robustness of DREAM, we conduct additional ablation studies on CIFAR-10 and TinyImageNet. We vary the rectification strength $\gamma$, subspace dimension $k$, prototype loss weight $\lambda_{Proto}$, and distillation loss weight $\lambda_{KD}$ while keeping the other hyperparameters fixed.

\begin{table}[t]
    \centering
    \setlength{\tabcolsep}{17pt}
    \caption{Sensitivity Analysis of Hyperparameters on CIFAR-10. We evaluate the impact of rectification strength $\gamma$, subspace dimension $k$, and weights for prototype loss ($\lambda_{Proto}$) and distillation loss ($\lambda_{KD}$) on model performance. Default settings are bolded.}
    \label{tab:additional_hyperparams_cifar10}
    \footnotesize

    \begin{subtable}{0.24\linewidth}
        \centering
        \caption{Rectification Strength $\gamma$}
        \begin{tabular}{c|c}
            \toprule
            $\gamma$ & Acc. \\
            \midrule
            0.6 & 85.67 \\
            0.7 & 86.97 \\
            \textbf{0.8} & \textbf{88.98} \\
            0.9 & 86.11 \\
            1.0 & 65.78 \\
            \bottomrule
        \end{tabular}
    \end{subtable}
    \hfill
    \begin{subtable}{0.24\linewidth}
        \centering
        \caption{Subspace Dimension $k$}
        \begin{tabular}{c|c}
            \toprule
            $k$ & Acc. \\
            \midrule
            1 & 70.11 \\
            3 & 85.09 \\
            \textbf{5} & \textbf{88.98} \\
            8 & 86.27 \\
            10 & 79.92 \\
            \bottomrule
        \end{tabular}
    \end{subtable}
    \hfill
    \begin{subtable}{0.24\linewidth}
        \centering
        \caption{Proto Weight $\lambda_{Proto}$}
        \begin{tabular}{c|c}
            \toprule
            $\lambda_{Proto}$ & Acc. \\
            \midrule
            0 & 86.07 \\
            0.05 & 87.80 \\
            \textbf{0.10} & \textbf{88.98} \\
            0.50 & 84.80 \\
            1.00 & 83.71 \\
            \bottomrule
        \end{tabular}
    \end{subtable}
    \hfill
    \begin{subtable}{0.24\linewidth}
        \centering
        \caption{KD Weight $\lambda_{KD}$}
        \begin{tabular}{c|c}
            \toprule
            $\lambda_{KD}$ & Acc. \\
            \midrule
            0.1 & 85.41 \\
            0.5 & 85.33 \\
            \textbf{1.0} & \textbf{88.98} \\
            5.0 & 84.80 \\
            10.0 & 83.59 \\
            \bottomrule
        \end{tabular}
    \end{subtable}
\end{table}

\begin{table}[t]
    \centering
    \setlength{\tabcolsep}{17pt}
    \caption{Sensitivity Analysis of Hyperparameters on  TinyImageNet. We evaluate the impact of rectification strength $\gamma$, subspace dimension $k$, and weights for prototype loss ($\lambda_{Proto}$) and distillation loss ($\lambda_{KD}$) on model performance. Default settings are bolded.}
    \label{tab:additional_hyperparams_tiny}
    \footnotesize

    \begin{subtable}{0.24\linewidth}
        \centering
        \caption{Rectification Strength $\gamma$}
        \begin{tabular}{c|c}
            \toprule
            $\gamma$ & Acc. \\
            \midrule
            0.6 & 50.21 \\
            0.7 & 52.09 \\
            \textbf{0.8} & \textbf{55.54} \\
            0.9 & 50.38 \\
            1.0 & 47.56 \\
            \bottomrule
        \end{tabular}
    \end{subtable}
    \hfill
    \begin{subtable}{0.24\linewidth}
        \centering
        \caption{Subspace Dimension $k$}
        \begin{tabular}{c|c}
            \toprule
            $k$ & Acc. \\
            \midrule
            1 & 50.44 \\
            3 & 51.27 \\
            \textbf{5} & \textbf{55.54} \\
            8 & 50.04 \\
            10 & 46.28 \\
            \bottomrule
        \end{tabular}
    \end{subtable}
    \hfill
    \begin{subtable}{0.24\linewidth}
        \centering
        \caption{Proto Weight $\lambda_{Proto}$}
        \begin{tabular}{c|c}
            \toprule
            $\lambda_{Proto}$ & Acc. \\
            \midrule
            0 & 49.52 \\
            0.05 & 51.59 \\
            \textbf{0.10} & \textbf{55.54} \\
            0.50 & 50.02 \\
            1.00 & 49.98 \\
            \bottomrule
        \end{tabular}
    \end{subtable}
    \hfill
    \begin{subtable}{0.24\linewidth}
        \centering
        \caption{KD Weigh $\lambda_{KD}$}
        \begin{tabular}{c|c}
            \toprule
            $\lambda_{KD}$ & Acc. \\
            \midrule
            0.1 & 49.22 \\
            0.5 & 52.91 \\
            \textbf{1.0} & \textbf{55.54} \\
            5.0 & 53.18 \\
            10.0 & 51.43 \\
            \bottomrule
        \end{tabular}
    \end{subtable}
\end{table}

The results in Table~\ref{tab:additional_hyperparams_cifar10} and~\ref{tab:additional_hyperparams_tiny} show that the same default configuration, \emph{i.e.}, $\gamma=0.8$, $k=5$, $\lambda_{Proto}=0.1$, and $\lambda_{KD}=1.0$, works consistently across datasets. Moderate variations around the default setting preserve strong performance, while overly aggressive rectification, such as $\gamma=1.0$, degrades accuracy. This is consistent with our soft-projection design, since domain-sensitive directions may still contain entangled semantic information and should be suppressed rather than completely removed.

\subsection{Ablation Study on Synthetic Data Quality}
\label{app:exp_synth_quality}

Since our method relies on a Training-Free Generator, the quality of synthetic data is a critical factor. We investigate three aspects affecting synthetic quality: the richness of text prompts, the choice of generator architecture, and the usage of data filtering strategies. The results on CIFAR-100~($T=10$) are shown in Table \ref{tab:ablation_gen_filter}.

\textbf{1. Impact of VLM Prompts:}
As discussed in Sec~\ref{app:data_gen}, we extract fine-grained image details—such as category, color, posture, background, and action—via a VLM, which are then used to create text prompts. The format for generating images is: \textit{``A photo of a [CLASS], which has [DETAILS]''}, where DETAILS are extracted by the VLM. Compared to using only category information~(\emph{e.g.}, \textit{``A photo of a [CLASS]''}), our method offers richer and more detailed descriptions. The information fed into the generator must be the correct category information. In addition, the final quality of the synthesis image is dependent on the generator. For the same image, we extract fine-grained text using different methods and then feed these texts into the frozen generator. We compare the text results with the image results, as shown in Figure~\ref{fig:supp_vlm_grid}. In addition, to further enhance sample quality, we use data filtering to ensure the generative replay data is both accurate and high-quality.

\textbf{2. Generator Architecture (Qwen-Image~\cite{wu2025qwen} vs. SD v1.5~\cite{rombach2022high}):} 
We evaluate the impact of changing the underlying generator. While Stable Diffusion v1.5 is a standard baseline~\cite{meng2024diffclass}, the more advanced Qwen-Image model generates higher-fidelity images. Interestingly, while Qwen-Image provides a baseline improvement, our rectification method yields significant gains on \textit{both} architectures. This demonstrates that our method is \textbf{generator-agnostic}: it effectively rectifies domain bias regardless of the specific artifacts introduced by the generator.

\textbf{3. Data Filtering:}
We analyze the impact of Task-Constraint Filtering. Since the generated images are of high quality and we focused solely on classification accuracy without considering confidence levels for filtering, only a small number of images were removed. We report the average data volume per category after filtering at each stage.

\begin{figure}[t]
    \centering
    \begin{subfigure}[b]{0.48\linewidth}
        \centering
        \includegraphics[width=\linewidth]{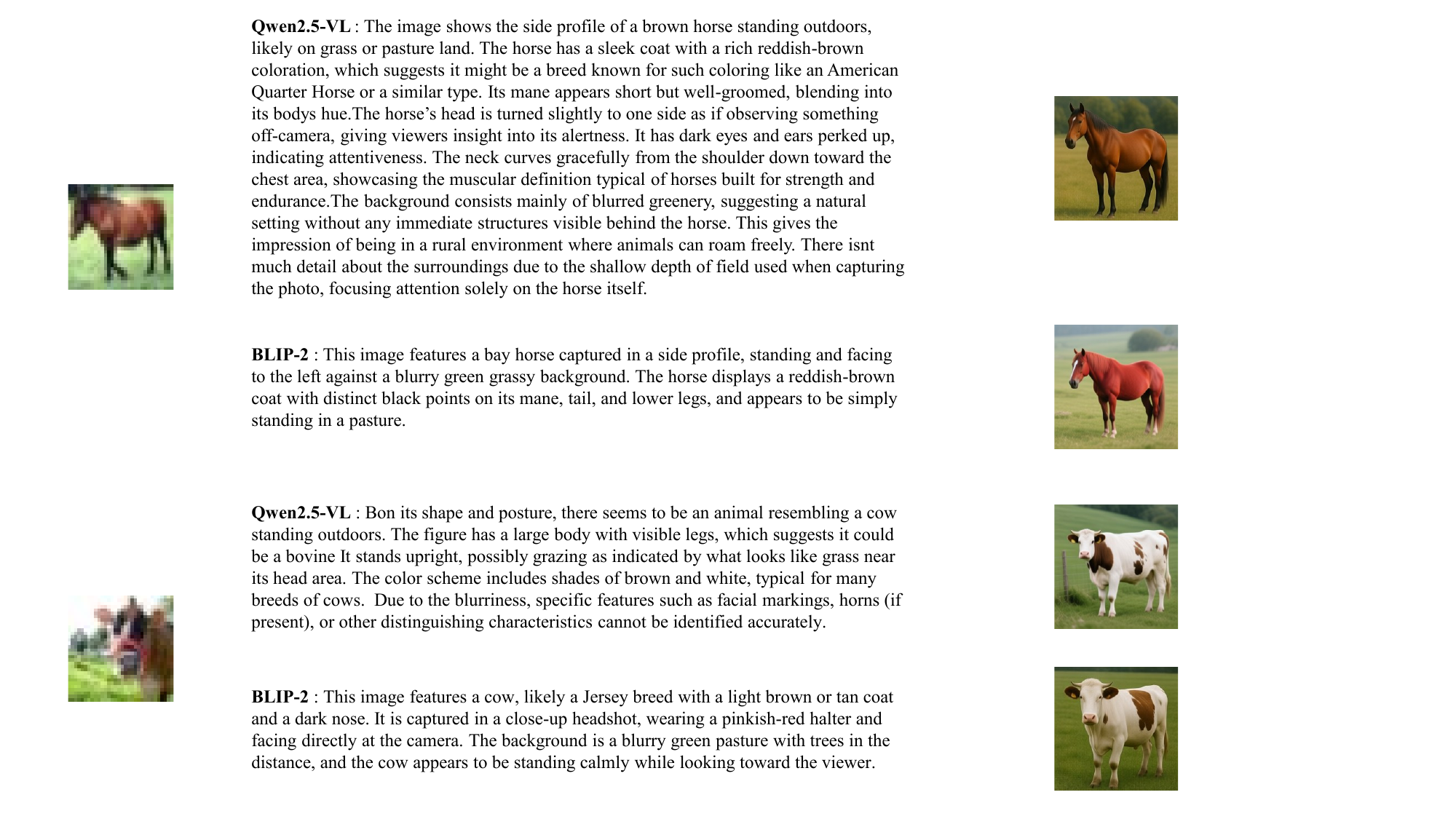} 
        \caption{CIFAR10}
        \label{fig:tiny_t5}
    \end{subfigure}
    \hfill
    \begin{subfigure}[b]{0.48\linewidth}
        \centering
        \includegraphics[width=\linewidth]{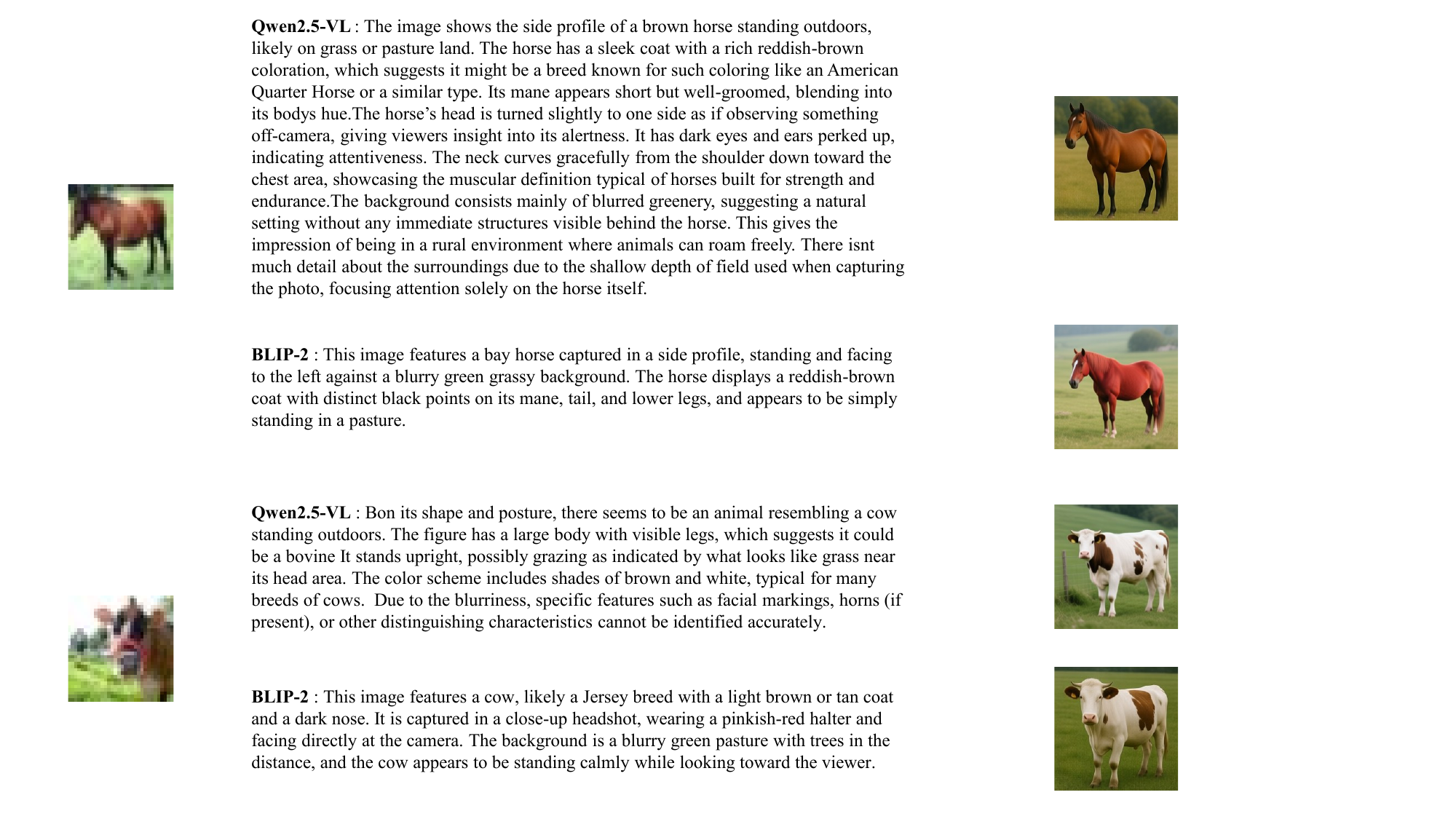}
        \caption{CIFAR100}
        \label{fig:tiny_t10}
    \end{subfigure}
    
    \vspace{0.5cm}
    
    \begin{subfigure}[b]{0.48\linewidth}
        \centering
        \includegraphics[width=\linewidth]{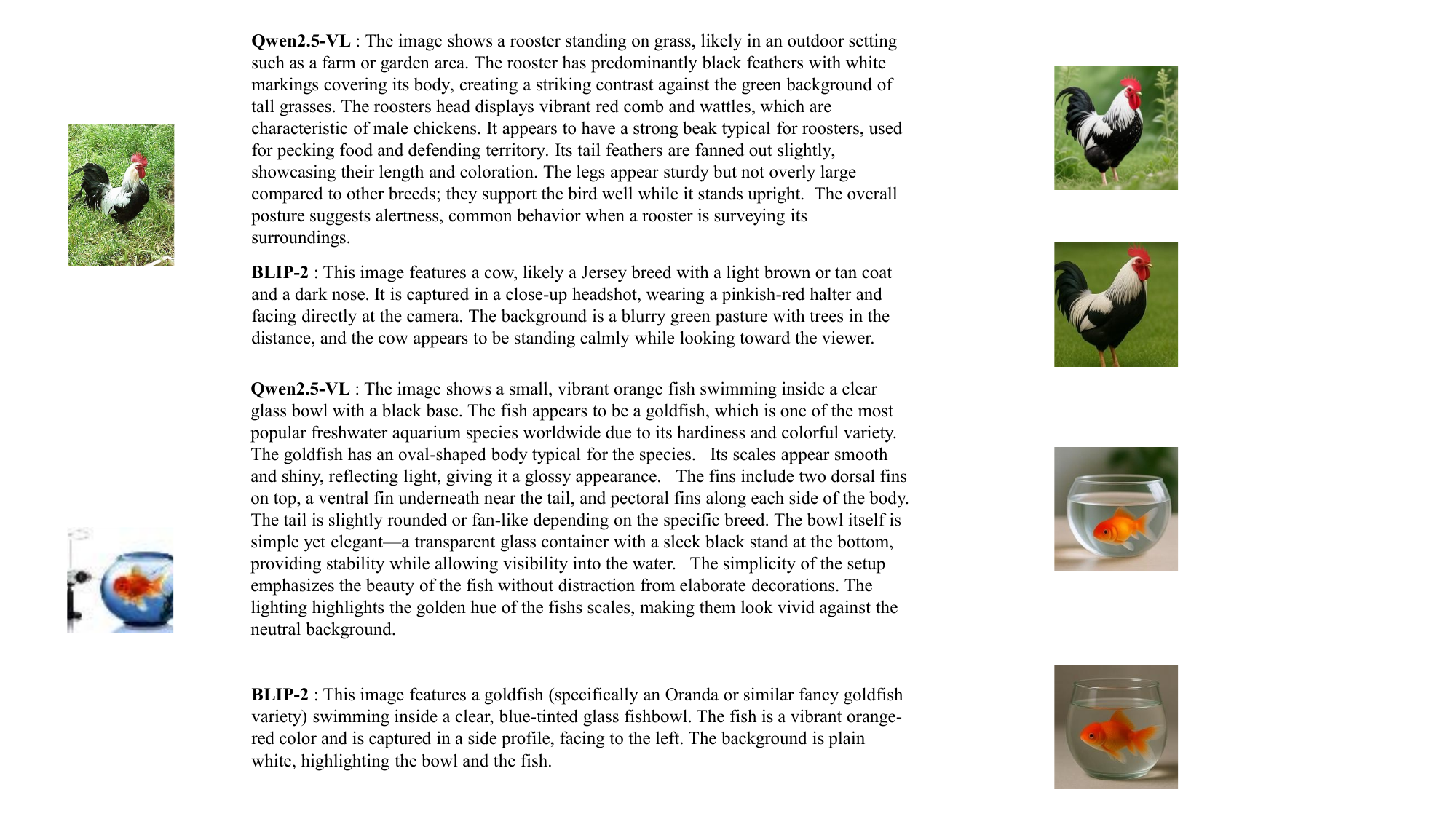}
        \caption{ImageNet-Subst}
        \label{fig:tiny_t20}
    \end{subfigure}
    \hfill
    \begin{subfigure}[b]{0.48\linewidth}
        \centering
        \includegraphics[width=\linewidth]{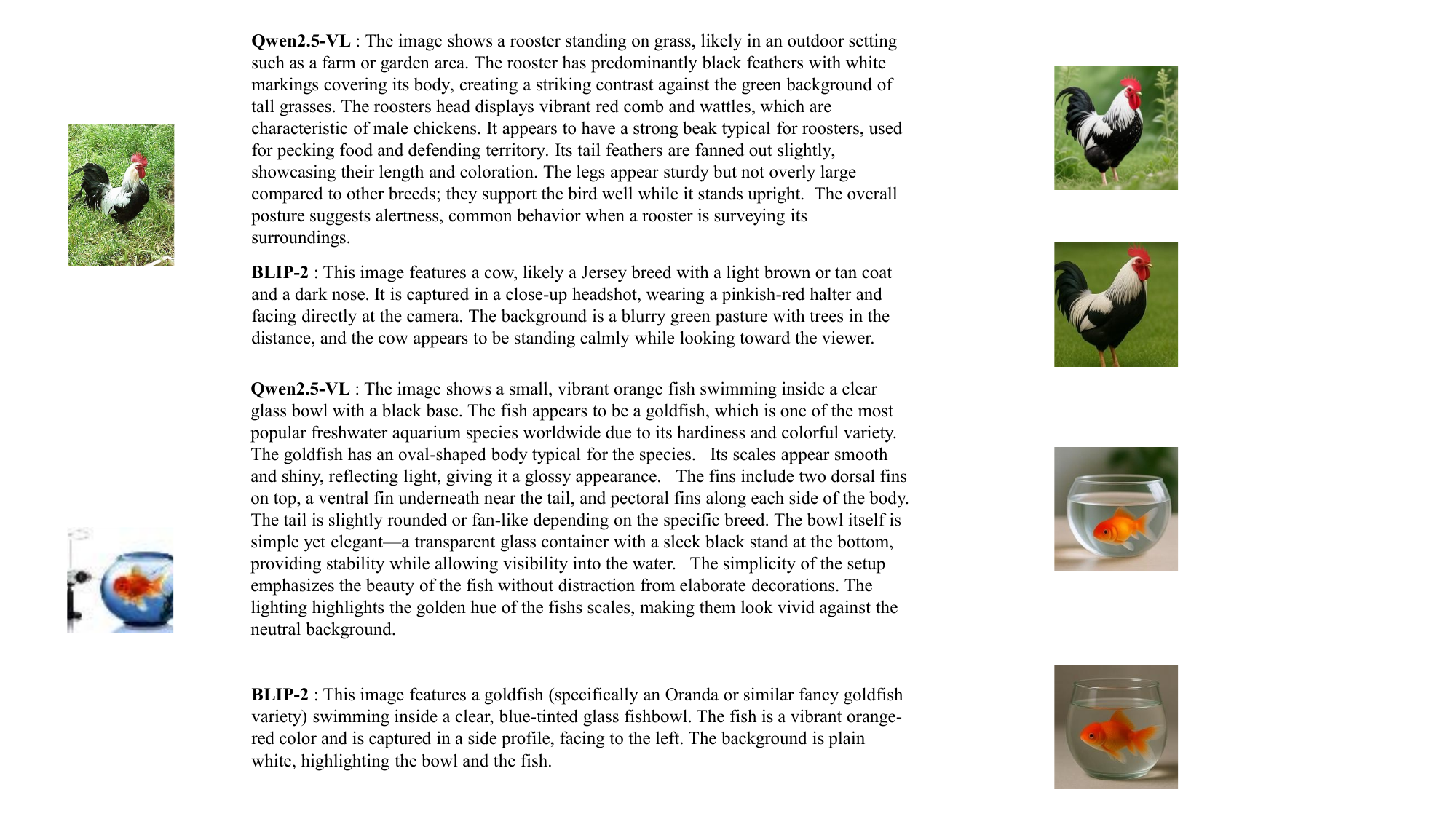}
        \caption{TinyImageNet}
        \label{fig:tiny_ablation}
    \end{subfigure}
    
    \caption{\textbf{Robustness to VLM Selection.} We compare the generated results using captions derived from different VLMs (Qwen2.5-VL~\cite{bai2025qwen2} vs. BLIP-2~\cite{li2023blip}). Despite the variation in caption length and detail, the synthesized images remain visually similar and semantically consistent with the original inputs. This indicates that the specific choice of VLM has a negligible impact on the final generation quality.}
    \label{fig:supp_vlm_grid}
    \vspace{-2mm}
\end{figure}

\begin{table}[t]
    \centering
    \caption{Impact of Generator Architecture and Data Filtering. We compare the performance of Qwen-Image versus SD 1.5 and evaluate the efficacy of filtering.}
    \label{tab:ablation_gen_filter}
    \begin{tabular}{cc|ccc}
        \toprule
        \textbf{Generator} & \textbf{Filtering} & \textbf{Data Vol.} & \textbf{Time (h)} & \textbf{Acc (\%)} \\
        \midrule
        \multirow{2}{*}{SD 1.5~\cite{rombach2022high}} & w/o Filter & 500 & $\approx$ 5.1 & 68.08 \\
        & w/ Filter & 425 & $\approx$ 4.1 & 68.84 \\
        \midrule
        \multirow{2}{*}{Qwen-Image~\cite{wu2025qwen}} & w/o Filter & 500 & $\approx$ 5.2 & 68.67 \\
        & w/ Filter & 468 & $\approx$ 4.3 & \textbf{69.73} \\
        \bottomrule
    \end{tabular}
\end{table}

\begin{figure}[t]
    \centering
    \includegraphics[width=0.72\linewidth]{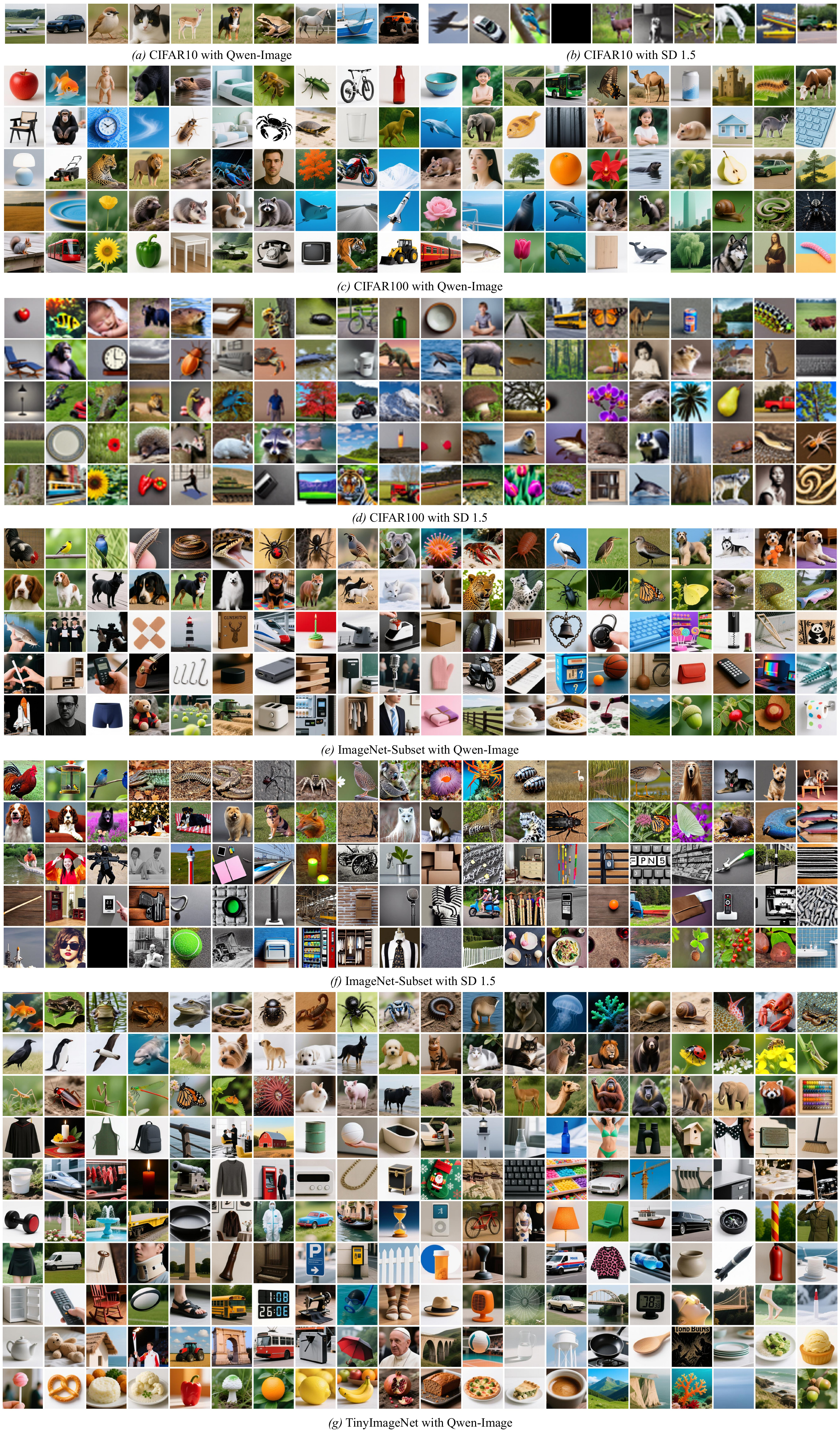} 
    \caption{\textbf{Visualization of Synthesized Samples.} We display synthetic images generated by our proposed DREAM framework across four standard benchmarks: CIFAR-10, CIFAR-100, ImageNet-Subset, and TinyImageNet. To demonstrate the universality and scalability of our method, we present results using two distinct generative backbones: Qwen-Image and Stable Diffusion v1.5. The visualized samples exhibit high fidelity and diversity, confirming the effectiveness of our approach across different datasets and model architectures.}
    \label{fig:supp_gendata}
\end{figure}

To assess the sensitivity of our framework to the choice of the Vision-Language Model, we compare the generated samples conditioned on captions from Qwen2.5-VL~\cite{bai2025qwen2} and BLIP-2~\cite{li2023blip}. As illustrated in Figure~\ref{fig:supp_vlm_grid}, although Qwen2.5-VL provides more granular descriptions compared to the concise outputs of BLIP-2, the resulting synthesized images exhibit high semantic fidelity in both cases. This qualitative comparison suggests that the performance is largely independent of the specific VLM employed, provided the captions capture the core visual attributes.

Consequently, DREAM demonstrates strong robustness to generator architecture. Using the older Stable Diffusion 1.5, our method achieves 68.84\%, close to the 69.73\% from Qwen-Image, showing that our feature rectification mechanism compensates for varying generative quality. Task-Constraint Filtering consistently improves performance (\emph{e.g.}, boosting Qwen-Image from 68.67\% to 69.73\%), preventing noisy generations from affecting decision boundaries. However, the filtering effect is modest, indicating that Feature Domain Rectification can ``repair'' distributions even with low-quality data. Notably, with the same generator (Stable Diffusion 1.5), our method outperforms DiffClass~\cite{meng2024diffclass} by +0.03\%~(The result of DiffClass is seen in Table~\ref{tab:main_results}), even without filtering. This demonstrates that generating data with fine-grained prompts and applying DREAM yields results comparable to methods requiring generator retraining.

Besides, we present examples of the samples generated by the generators for each dataset, as shown in Figure~\ref{fig:supp_gendata}.

\subsection{Computational Efficiency Analysis}
\label{app:exp_efficiency}

A decisive advantage of our DREAM framework is its \textbf{Training-Free generator}. To quantify this efficiency, we conduct a breakdown comparison against the state-of-the-art generative replay method, DiffClass~\cite{meng2024diffclass}, focusing on \textbf{Trainable Parameter Overhead} and \textbf{Training Time Consumption}.

\subsubsection{Trainable Parameter Overhead}
In EFCIL, minimizing the memory footprint of additional trainable modules is crucial. We compare the components that require gradient updates:

\begin{itemize}
    \item \textbf{DiffClass:} Requires training a \textbf{LoRA} adapter for the diffusion model to learn task-specific distributions, plus a separate \textbf{Domain Classifier} to distinguish real/synthetic domains.
    \item \textbf{DREAM (Ours):} Utilizes a completely \textbf{frozen} generator. The Feature Domain Rectification (FDR) operates via PCA/SVD on feature matrices, which are non-parametric mathematical operations. No domain classifier is required.
\end{itemize}

We use ResNet-32 as the classifier for category classification and compare the additional trainable parameters. As shown in Table \ref{tab:param_efficiency}, our method adds \textbf{zero} additional parameters for the generative replay component, minimizing storage overhead. Additionally, our trainable parameters are independent of the generator, so even with Stable Diffusion 1.5, our method has fewer trainable parameters than DiffClass~\cite{meng2024diffclass}, while achieving comparable or superior performance.

\begin{table}[t]
    \centering
    \caption{\textbf{Comparison of Extra Trainable Parameters (Generative Component Only)}. Our method requires no additional gradient-based updates for the generator or auxiliary discriminators. Since DiffClass is not open-source, we can only estimate the number of its additional parameters.}
    \label{tab:param_efficiency}
    \begin{tabular}{l|ccc|c}
        \toprule
        \textbf{Method} & \textbf{Generator Adapter} & \textbf{Domain Classifier} & \textbf{Task Classifier} & \textbf{Total Extra Params} \\
        \midrule
        DiffClass & LoRA ($\approx$ 2M) & Linear Head ($\approx$ 0.00013 M) & ResNet-32 & $\approx$ \textbf{2.00013 M} \\
        \textbf{DREAM (Ours)} & \textbf{None (0)} & \textbf{None (0)} & ResNet-32 & \textbf{0} \\
        \bottomrule
    \end{tabular}
\end{table}

\subsubsection{Time Consumption Breakdown}
We further decompose the time cost per incremental task into four components: Generator Training ($T_{train\_gen}$), Image Generation ($T_{gen}$), Domain Classifier Training ($T_{train\_dom}$), and Rectification Calculation ($T_{rect}$, \emph{e.g.}, SVD).
\begin{equation}
    T_{total} = \underbrace{T_{train\_gen} + T_{train\_dom}}_{\text{Training Overhead}} + \underbrace{T_{gen}}_{\text{Inference}} + \underbrace{T_{rect}}_{\text{Calculation}} + T_{train\_cls}
\end{equation}
We estimate the time excluding classifier training time~($T_{train_cls}$). Table \ref{tab:time_breakdown} presents the quantitative comparison.
\begin{enumerate}
    \item \textbf{Training Bottleneck:} DiffClass requires iterative back-propagation to train the LoRA adapter ($T_{train\_gen}$) for every new task, which is the major computational bottleneck. In contrast, our method skips this stage entirely ($T_{train\_gen} = 0$).
    \item \textbf{Rectification Efficiency:} 
    Since the features used for SVD calculation come from the training cache, no separate extraction is needed. Additionally, the SVD and covariance calculations ($T_{rect}$) are performed once per epoch. Therefore, $T_{rect}$ takes negligible time compared to neural network training.
    \item \textbf{Inference Trade-off:} While advanced models like Qwen-Image may have a higher single-image inference latency ($T_{gen}$) compared to SD v1.5, the elimination of the training phase results in a significantly lower \textbf{Total Time}.
\end{enumerate}

\begin{table}[t]
    \centering
    \caption{\textbf{Time Consumption Analysis (Per Task on CIFAR-100)}. Time is measured in minutes. Since DiffClass is not open-source, we can only estimate the number of its additional time consumption.}
    \label{tab:time_breakdown}
    \begin{tabular}{l|c|cc|c|c|c}
        \toprule
        \multirow{2}{*}{\textbf{Method}} & \multirow{2}{*}{\textbf{Backbone}} & \multicolumn{2}{c|}{\textbf{Training Phase}} & \textbf{Generation} & \textbf{Rectification} & \multirow{2}{*}{\textbf{Total Time}} \\
        & & $T_{train\_gen}$ (LoRA) & $T_{train\_dom}$ & $T_{gen}$ & $T_{rect}$ (SVD) & \\
        \midrule
        DiffClass & SD 1.5 & $\approx$ 50 min & $\approx$ 0 min & $\approx$ 50 min & 0 & $\approx$ \textbf{100 min} \\
        \midrule
        \textbf{DREAM} & SD 1.5 & \textbf{0} & \textbf{0} & $\approx$ 50 min & $<$ \textbf{1 min} & \textbf{$\approx$ 50 min} \\
        \textbf{DREAM} & Qwen-Image & \textbf{0} & \textbf{0} & $\approx$ 100 min & $<$ \textbf{1 min} & \textbf{$\approx$ 100 min} \\
        \bottomrule
    \end{tabular}
\end{table}

DREAM achieves superior efficiency by eliminating generator training overhead ($T_{train\_gen}$). Unlike DiffClass, which is limited by iterative LoRA fine-tuning, our method incurs negligible cost for rectification ($T_{rect} < 1$ min). Even with the advanced Qwen-Image generator, the total time for DREAM remains comparable to the baseline. Using SD 1.5 saves generator training time entirely, demonstrating an effective balance between performance and training efficiency.

\section{Drawbacks and Future Work}
\label{app:drawbacks_future_work}

Although DREAM effectively addresses the synthetic-real domain shortcut in training-free generative replay CIL, it still has several limitations that motivate future work.

\paragraph{Scope of Applicability.}
DREAM is designed for standard class-incremental learning under the strict exemplar-free generative replay setting, where old classes are replayed using synthetic images and new classes are observed as real images. This setting induces a clear synthetic-old versus real-new asymmetry, which is the source of the domain shortcut studied in this work. In more symmetric settings, such as joint training or exemplar-based CIL where real and synthetic data are available for both old and new classes, synthetic data may instead serve as beneficial augmentation rather than causing a shortcut. Extending DREAM beyond standard classification CIL remains an important direction.

\paragraph{Extension to More Complex Scenarios.}
Several practical CIL scenarios require additional design beyond the current framework. For long-tailed CIL, generative replay naturally supports class balancing through prompt control, but the interaction between class imbalance and domain rectification needs further study. For detection and segmentation, the current global feature rectification should be extended to spatially structured representations, such as ROI-level or pixel-level alignment. For domain-incremental learning, the domain subspace may evolve over time, requiring dynamic subspace tracking rather than a static projection estimated from the current task.

\paragraph{Reliance on Shared Domain Subspace.}
DREAM relies on the assumption that the frozen generator induces a relatively shared synthetic-real domain subspace across classes. Our theoretical analysis and empirical results support this assumption under the standard single-generator setting, and our additional multi-source experiments show that DREAM remains robust when replay data are generated by heterogeneous sources. However, when the incremental process involves highly diverse generators, drastic real-domain shifts, or continually changing generation styles, the domain bias may become multi-modal and harder to capture using a single static subspace. Future work may explore adaptive rectification, mixture-of-subspaces modeling, or online subspace tracking to better handle such heterogeneous cases.

\paragraph{Dependence on Foundation Models.}
Our generative replay pipeline uses a vision-language model for attribute extraction and a pretrained text-to-image model for image synthesis. As a result, hallucinations from the VLM or biases inherited from the generator may affect replay quality. We mitigate severe semantic errors using task-constraint filtering and reduce isolated feature noise through class-mean centering, but more principled quality control remains valuable. Future work may incorporate uncertainty-aware filtering, confidence calibration, or human-aligned safety constraints to further improve the reliability of generated replay data.

\paragraph{Computational Trade-offs.}
DREAM eliminates generator retraining and introduces no additional trainable generator parameters. The SVD-based rectification itself is lightweight due to feature caching. Nevertheless, the total cost still depends on the inference speed of the frozen generator used to synthesize replay images. More efficient generation strategies, prompt selection, or feature-level replay could further reduce the computational cost while preserving the benefits of training-free generative replay.

%%%%%%%%%%%%%%%%%%%%%%%%%%%%%%%%%%%%%%%%%%%%%%%%%%%%%%%%%%%%%%%%%%%%%%%%%%%%%%%
%%%%%%%%%%%%%%%%%%%%%%%%%%%%%%%%%%%%%%%%%%%%%%%%%%%%%%%%%%%%%%%%%%%%%%%%%%%%%%%

\end{document}